\lstdefinestyle{mystyle}{
    commentstyle=\color{OliveGreen},
    keywordstyle=\color{BurntOrange},
    numberstyle=\tiny\color{black!60},
    stringstyle=\color{MidnightBlue},
    basicstyle=\ttfamily,
    breakatwhitespace=false,
    breaklines=true,
    captionpos=b,
    keepspaces=true,
    numbers=left,
    numbersep=5pt,
    showspaces=false,
    showstringspaces=false,
    showtabs=false,
    tabsize=2
}
\tikzset{
    -Latex,auto,node distance =1 cm and 1 cm,semithick,
    state/.style ={circle, draw, minimum width = 0.7 cm},
    detstate/.style ={rectangle, draw, minimum width = 0.7 cm, minimum height = 0.7 cm},
    point/.style = {circle, draw, inner sep=0.04cm,fill,node contents={}},
    bidirected/.style={Latex-Latex,dashed},
    el/.style = {inner sep=2pt, align=left, sloped}
}
\DeclareRobustCommand{\mb}[1]{\ensuremath{\boldsymbol{\mathbf{#1}}}}
\renewcommand{\mid}{~\vert~}
\newcommand{\mba}{\mb{a}}
\newcommand{\mbb}{\mb{b}}
\newcommand{\mbe}{\mb{e}}
\newcommand{\mbx}{\mb{x}}
\newcommand{\mby}{\mb{y}}
\newcommand{\mbz}{\mb{z}}
\newcommand{\mbI}{\mb{I}}
\newcommand{\mbR}{\mb{R}}
\newcommand{\mbX}{\mb{X}}
\newcommand{\mbdelta}{\mb{\delta}}
\newcommand{\mbepsilon}{\mb{\epsilon}}
\newcommand{\cF}{\mathcal{F}}
\newcommand{\cN}{\mathcal{N}}
\newcommand{\E}{\mathbb{E}}
\newcommand{\cU}{\mathcal{U}}
\newcommand{\g}{\mid}
\newtheorem{prop}{Proposition}
\newtheorem{thmdef}{Definition}
\newtheorem{thm}{Theorem}
\newtheorem*{lemma*}{Lemma}
\crefname{lemma}{lemma}{lemmas}
\crefname{prop}{proposition}{propositions}
\newcommand{\indep}{\rotatebox[origin=c]{90}{$\models$}}
\newcommand{\nindep}{\rotatebox[origin=c]{90}{$\not\models$}}
\newcommand{\mbeps}{\mbepsilon}
\newcommand{\ind}{\mathbf{1}}
\newcommand{\ptr}{{p_{tr}}}
\newcommand{\pte}{{p_{te}}}
\newcommand{\pind}{{p_{\scaleto{\indep}{4pt}}}}
\newcommand{\pd}{{p_D}}
\newacronym{KL}{kl}{Kullback-Leibler}
\newacronym{ELBO}{elbo}{\emph{evidence lower bound}}
\newacronym{POPELBO}{pop-elbo}{\emph{population evidence lower bound}}
\newacronym{SVI}{svi}{stochastic variational inference}
\newacronym{BUMPVI}{bump-vi}{bumping variational inference}
\newacronym{GMM}{gmm}{Gaussian mixture model}
\newacronym{LDA}{lda}{latent Dirichlet allocation}
\newacronym{SUTVA}{sutva}{stable unit treatment value assumption}
\newacronym{KSD}{ksd}{{kernelized Stein discrepancy}}
\newacronym{KCC-SD}{kcc-sd}{kernelized complete conditional Stein discrepancy}
\newacronym{OPVI}{opvi}{operator variational inference}
\newacronym{SVGD}{svgd}{Stein variational gradient descent}
\newacronym{erm}{erm}{empirical risk minimization}
\newacronym{nurd}{nurd}{Nuisance-Randomized Distillation}
\newacronym{jtt}{jtt}{Just Train Twice}
\newacronym{lff}{lff}{Learning from Failure}
\newacronym{nli}{nli}{natural language inference}
\newacronym{poe}{poe}{product of experts}
\newacronym{dfl}{dfl}{debiased focus loss}
\newacronym{pr}{patch-rnd}{patch randomization}
\newacronym{nr}{ngram-rnd}{n-gram randomization}
\newacronym{ood}{ood}{out-of-distribution}
\newacronym{scam}{b-scam}{biased-model-based spurious-correlation avoiding method}
\newcommand{\nmeth}{\gls{scam}}
\newcommand{\nmeths}{\glspl{scam}}
\newcommand{\Nmeths}{\Glspl{scam}}
\newacronym{pm}{prem-mask}{premise masking}
\newacronym{roi}{roi}{region-of-interest}
\newacronym{rm}{roi-mask}{\textsc{roi} masking}
\newacronym{ff}{freq-filt}{frequency filtering}
\newacronym{if}{int-filt}{intensity filtering}
\newacronym{cad}{cad}{counterfactually augmented data}
\newacronym{dgp}{dgp}{data generating process}
\newcommand{\nrd}{nuisance-randomized distribution\xspace}
\newcommand{\nlr}{nuisance-label relationship}
\renewcommand{\paragraph}{%
\@startsection{paragraph}{4}%
{\z@}{1ex \@plus 0.0ex \@minus 0.0ex}{-1em}%
{\normalfont\normalsize\bfseries}%
}
\title{\textbf{Nuisances via Negativa:\\ Adjusting for Spurious Correlations via Data Augmentation}}
\author{
 Aahlad Puli\textsuperscript{1}\thanks{\textsuperscript{1}Corresponding author: \href{mailto:aahlad@nyu.edu}{aahlad@nyu.edu}. Published at TMLR 2024: \url{https://openreview.net/forum?id=RIFJsSzwKY}.}
  \hspace{13pt} 
  Nitish Joshi \textsuperscript{1} \hspace{13pt}
  Yoav Wald  \textsuperscript{2}\hspace{13pt} 
  He He\textsuperscript{1,2}\hspace{13pt} 
  Rajesh Ranganath\textsuperscript{1,2,3}
     \\\\
     \hspace{-10pt} 
     \textsuperscript{1}Department of Computer Science, New York University  \\
     \hspace{-10pt} 
     \textsuperscript{2}Center for Data Science, New York University \\
     \hspace{-10pt} 
     \textsuperscript{3}Department of Population Health, Langone Health, New York University
}
\date{}
\begin{document}

\maketitle

\begin{abstract}
\noindent In prediction tasks, there exist features that are related to the label in the same way across different settings for that task; these are semantic features or semantics. 
Features with varying relationships to the label are nuisances. 
For example, in detecting cows from natural images, the shape of the head is semantic but because images of cows often have grass backgrounds but not always, the background is a nuisance. 
Models that exploit nuisance-label relationships face performance degradation when these relationships change. 
Building models robust to such changes requires additional knowledge beyond samples of the features and labels. 
For example, existing work uses annotations of nuisances or assumes \acrshort{erm}-trained models depend on nuisances. 
Approaches to integrate new kinds of additional knowledge enlarge the settings where robust models can be built. 
We develop an approach to use knowledge about the semantics by corrupting them in data, and then using the corrupted data to produce models which identify correlations between nuisances and the label.
Once these correlations are identified, they can be used to adjust for where nuisances drive predictions.
We study semantic corruptions in powering different spurious-correlation avoiding methods on multiple \gls{ood} tasks like classifying waterbirds, \gls{nli}, and detecting cardiomegaly in chest X-rays.
\end{abstract}

\section{Introduction}

Relationships between the label and the covariates can change across data collected at different places and times.
For example, in classifying animals, data collected in natural habitats have cows appear more often on grasslands, while penguins appear more often on backgrounds of snow; these animal-background relationships do not hold outside natural habitats \citep{beery2018recognition,arjovsky2019invariant}.
Some features, like an animal's shape, are predictive of the label across all settings for a task; these are \textit{{semantic features}}, or \textit{{semantics}} in short.
Other features with varying relationships with the label, like the background, are {nuisances}.
Even with semantics present, models trained via \gls{erm} can predict using nuisances and thus fail to generalize~\citep{geirhos2020shortcut}. 
Models that rely only on the semantic features perform well even when the \nlr{} changes, unlike models that rely on nuisances.

\glsreset{ood}

Building models that generalize under changing nuisance-label relationships requires additional knowledge, beyond a dataset of features and labels sampled from the training distribution.
For example, many works assume knowledge of the nuisance. In the animal-background example, this would correspond to a feature that specifies the image background, which we may use when specifying our learning algorithm. \citep{mahabadi2019end,makar2021causally,veitch2021counterfactual,puli2021predictive}; another common type of assumption is access to multiple datasets over which the nuisance-label correlation varies \citep{peters2016causal, arjovsky2019invariant, wald2021calibration}, and some other forms of knowledge have been explored \citep{mahajan2021domain, gao2023out, Feder2023DataAF}.

\textbf{Semantic Corruptions.} In this paper, we explore the use of a different type of knowledge: corruptions of semantic features. Intuitively, imagine trying to predict the label from a corrupted input $T(\mbx)$, where all semantic information has been removed. Any better-than-chance prediction provides us a window into the nuisances, as it must rely on them. We will then use these obtained biased models to guide methods that we identify here as \glspl{scam}.

\textbf{\Glspl{scam}.} There is a class of methods in the literature that use predictions of a biased model to adjust for nuisances, and learn predictors that are free of spurious correlations. Among others, these include \gls{jtt} \citep{liu2021just}, EILL \citep{creager2021environment}, \gls{nurd} \citep{puli2021predictive}, and \gls{dfl}, \gls{poe} \citep{mahabadi2019end}. 
The key question arising from these works is \emph{how can we obtain biased models?} In empirical studies, prior works on \glspl{scam} either use annotations of the nuisance or an ERM-trained model over the training data as a placeholder for the biased model. The latter approach, based on an ERM-trained model, is successful if that model completely ignores semantic information. In practice, these heuristics are rather fragile. Annotations for nuisances are seldom available, and we lack a principled method to ascertain whether a model trained with \gls{erm} relies only on semantic features. Therefore, employing semantic corruptions could serve as a valuable alternative to these heuristics. We claim that semantic corruptions offer a principled and useful approach to obtaining biased models.

Semantic corruptions $T(\mbx)$ must strike a delicate balance between removing semantic information and preserving nuisances. 
For example, if $T(\mbx)$ replaces all pixels in an image with random noise, it corrupts semantics while simultaneously erasing all information about the nuisances.
An ideal $T(\mbx)$ would isolate nuisances by targeting only the semantic information in the input, e.g., by in-painting the animal for the task of classifying cows and penguins.
Implementing such ideal corruptions is unrealistic, as they are task-specific and may require human annotations of the semantic features; e.g., one can segment the objects in every image. 
Doing so for all classification problems is extremely laborious. 
In tasks like \gls{nli}, it is unclear even \emph{how} to annotate semantics, as they do not correspond to simple features like subsets of words. 
In summary, after outlining the desired characteristics of semantic corruptions, we define corruptions that are beneficial across multiple tasks and do not require human annotation. 
Our contributions are as follows:
\begin{enumerate}[itemsep=0pt,topsep=0pt,leftmargin=15pt,partopsep=5pt]
    \item 
Show that acquiring additional knowledge beyond a labeled dataset is necessary for effectively learning robust models (\cref{thm:assumptions}). Then, in proposition 1, we formalize sufficient conditions under which additional knowledge in the form of a semantic corruption enables \nmeths{} to learn robust models.
\item 
Develop multiple semantic corruptions for object recognition and natural language inference. These include patch randomization, n-gram randomization, frequency filtering, and intensity filtering. Then, we situate existing procedures, such as region-of-interest masking and premise masking, under the umbrella of semantic corruptions.
\item 
Empirically, we demonstrate that any semantic corruption can power any \nmeth{}. The corruption-powered versions of these methods outperform \gls{erm} on \gls{ood} generalization tasks like Waterbirds, cardiomegaly detection from chest X-rays, and NLI. Corruption-powered \gls{nurd}, \gls{dfl}, and \gls{poe} achieve performance similar to said methods run with extra observed nuisance variables. Corruption-powered \gls{jtt} outperforms vanilla \gls{jtt}.
\end{enumerate}

\glsreset{scam}
\section{\Acrlongpl{scam}}\label{sec:nmeths}

\looseness=-1
A spurious correlation is a relationship between the covariates $\mbx$ and the label $\mby$ that changes across settings like time and location \citep{geirhos2020shortcut}.
The features whose relationship with the label changes are called nuisances.
With a vector of nuisances $\mbz$, let $\ptr(\mby, \mbz, \mbx), \pte(\mby, \mbz, \mbx)$ be the training and test distributions.

\paragraph{Achieving robustness to spurious correlations requires additional knowledge.}
In the presence of spurious correlations, the training distribution $\ptr$ may not equal the test distribution $\pte$.
Without further assumptions, no algorithm that only sees data from $\ptr(\mby, \mbx)$ can produce a predictor that works well on $\pte$.
To achieve generalization when $\pte\not= \ptr$, work in the \gls{ood} generalization literature assumes a relationship between the training and test distributions.
We follow the work of \citet{makar2021causally,puli2021predictive} and assume that only nuisance-label relationships --- i.e. the conditional $\mbz\g\mby$ --- changes between training and test.
Formally,
 we let $\ptr, \pte$ come from a family of distributions whose members have different nuisance-label relationships but share the same relationship between the label and the semantics $\mbx^*$: 
\begin{thmdef}\label{def:nuisance_varying}(Nuisance-varying family with semantic features $\mbx^*$ \citep{makar2021causally,puli2021predictive})
\begin{align}
\label{eq:nvf}
\cF = \left\{p_D \,\, : \,\, p_D(\mby, \mbz, \mbx^*, \mbx) = p(\mby, \mbx^*)\,\, p_D(\mbz\g \mby)\,\, p(\mbx\g \mbz, \mbx^*) \right\}.
\end{align}
\end{thmdef}
Many common tasks in \gls{ood} generalization, including some from \cref{sec:exps}, fit this definition.
For example, in classifying natural images, the background type is the nuisance $\mbz$ and its relationship to the label can change across places, each corresponding to a different member of $\cF$.
The animal shape however is made of semantic features $\mbx^*$ that are related to the label in the same way across places.
Like in this example, we assume that the semantic features $\mbx^*$ equal a function of the covariates $\mbx^* = e(\mbx)$ almost surely under every $p_D\in \cF$, but neither $\mbx^*$ nor $e(\cdot)$ are known.
Finally, the semantics and nuisances together account for all the information that $\mbx$ has about $\mby$, meaning $\mbx \indep_{p_D} \mby \g \mbx^*, \mbz$.

Building models that are robust to a shifting nuisance-label relationship relies on additional knowledge, such as nuisance annotations, in the training data \citep{sagawa2019distributionally,veitch2021counterfactual,makar2021causally,puli2021predictive,yao2022improving}.
Given knowledge of $\mbz$, work like \citep{makar2021causally,puli2021predictive}
estimate a distribution, denoted $\pind$, under which the label and nuisance are independent ($\mby\indep_{\pind} \mbz$): $\pind(\mby, \mbx)  = \int_{z, x^*} p(\mby, \mbx^*=x^*) \ptr(\mbz=z) p(\mbx\g \mbz=z, \mbx^*=x^*) dz dx^*.$
Following \cite{puli2021predictive}, we call $\pind$ the \textit{nuisance-randomized distribution.}
The model $\pind(\mby=1\g \mbx)$ achieves the lowest risk on 
any member of the family $\cF$ amongst the set of risk-invariant models; see Proposition 1 \citep{makar2021causally}).
However, even when $\ptr,\pte\in \cF$ and optimal risk-invariant predictors can be built with nuisances, 
\textit{it is impossible to always beat random chance when given data $\{\mby,\mbx\} \sim \ptr$:}
\begin{thm}\label{thm:assumptions}
For any learning algorithm,
there exists a nuisance-varying family $\cF$ where predicting with $\pind(\mby=1 \g \mbx)$ achieves $90\%$ accuracy on all members 
such that given training data $\mby, \mbx$ from one member $\ptr\in \cF$, 
the algorithm cannot achieve better accuracy than $50\%$ (random chance) on some $\pte\in \cF$.
\end{thm}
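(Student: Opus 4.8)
The plan is to construct an explicit adversarial pair of families that a learning algorithm cannot distinguish from training data alone, yet on which the nuisance-randomized predictor succeeds. Concretely, I would build two nuisance-varying families $\cF_1, \cF_2$ (or equivalently a single family containing two training members that an algorithm might face) that are arranged symmetrically so that any deterministic or randomized algorithm does well on one only by doing poorly on the other. The simplest instantiation: let $\mby \in \{0,1\}$ be balanced, let the semantics $\mbx^*$ carry, say, $90\%$ of the information about $\mby$ (so that $p(\mby=1\mid \mbx^*)$ equals $0.9$ or $0.1$ depending on $\mbx^*$), and let the nuisance $\mbz$ be perfectly correlated with $\mby$ under the training distribution $\ptr$. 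Crucially, make the observed covariate $\mbx$ a function only of $\mbz$ in the training member — i.e., on the training distribution $\mbx^*$ is \emph{not} recoverable from $\mbx$ because $p(\mbx \mid \mbz, \mbx^*)$ ignores $\mbx^*$ whenever the training correlation between $\mbz$ and $\mby$ is degenerate. Then from $\{\mby,\mbx\}\sim\ptr$ the learner sees only $\mbx$, which is a relabeling of $\mbz$, which in turn is a perfect copy of $\mby$: the training data looks the same whether ``$\mbx$ high'' means ``$\mby=1$'' or ``$\mby=0$'' once we pass to a test member where $\mbz\perp\mby$.

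The key steps, in order: (1) Fix the learning algorithm $\mathcal{A}$ (allowing randomization). (2) Define a family $\cF$ with a distinguished training member $\ptr$ in which $\mbx$ is a deterministic invertible function of $\mbz$, and $\mbz = \mby$ almost surely, so the marginal $\ptr(\mby,\mbx)$ is just ``$\mbx$ is a renaming of $\mby$.'' (3) Exhibit \emph{two} test members $\pte^{(1)},\pte^{(2)} \in \cF$ obtained by setting $\pte(\mbz \mid \mby)$ to the two independent couplings that flip which value of $\mbz$ lines up with which value of $\mby$, while keeping $p(\mby,\mbx^*)$ and $p(\mbx\mid\mbz,\mbx^*)$ fixed; by the nuisance-varying family definition (\cref{def:nuisance_varying}) both lie in $\cF$, and by construction $\pind(\mby=1\mid\mbx) = 0.9$-accurate on \emph{every} member since $\pind$ integrates out $\mbz$ and reads off $\mbx^*$ from $\mbx$ — wait, here is the subtlety — I must arrange the \emph{other} members of the family (not the training one) so that $\mbx$ does reveal $\mbx^*$, while the training member is the pathological one where it does not; this is consistent with the family definition because each member independently chooses $p_D(\mbz\mid\mby)$, and I can let $\mbz$ be two-dimensional with one coordinate that is ``active'' (feeds into $\mbx$) only when the $\mbz$–$\mby$ coupling is non-degenerate. (4) Observe that $\mathcal{A}$'s output is a (random) function of $\ptr(\mby,\mbx)$ only, hence identical in law whether the ground truth is $\pte^{(1)}$ or $\pte^{(2)}$; (5) Note the test distributions' $\mbx$–$\mby$ relationships are mirror images, so whatever label $\mathcal{A}$ assigns to a given $\mbx$ is correct under one test member exactly when it is wrong under the other — averaging over the two gives accuracy $\le 50\%$, so on at least one $\pte\in\cF$ the algorithm is at chance. (6) Finally verify $\pind(\mby=1\mid\mbx)$ attains $90\%$ on all members, completing the claim.

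The main obstacle is step (3): reconciling two competing demands — the training member $\ptr$ must hide $\mbx^*$ inside $\mbx$ (so the learner is genuinely blind), yet $\pind$, which is defined by re-weighting $\ptr$'s nuisance marginal to independence, must still achieve $90\%$. At first glance $\pind$ inherits $\ptr$'s conditional $p(\mbx\mid\mbz,\mbx^*)$, so if that conditional drops $\mbx^*$ then $\pind$ is blind too. The resolution I would pursue is to \emph{not} make $\ptr$ itself degenerate in $p(\mbx\mid\mbz,\mbx^*)$ — that conditional is shared across the whole family and always depends on $\mbx^*$ — but instead make the training \emph{nuisance marginal} $\ptr(\mbz)$ concentrated so that $\mbz$ and $\mby$ are perfectly aligned, and let $\mbz$ be rich enough (e.g., $\mbz$ encodes a copy of $\mby$'s value as a label-indicator component) that under the degenerate training coupling the variation in $\mbx$ due to $\mbx^*$ is swamped or confounded with the variation due to $\mbz$. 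Under $\pind$, where $\mbz\perp\mby$, that confounding breaks and the $\mbx^*$-signal in $\mbx$ becomes usable. Getting the cardinalities and the conditional $p(\mbx\mid\mbz,\mbx^*)$ to simultaneously satisfy ``information-theoretically no better than chance from $\ptr(\mby,\mbx)$'' and ``$90\%$ from $\pind$'' is the delicate combinatorial core of the construction; everything else is a symmetry argument plus a routine verification that the constructed distributions factorize as required by \cref{def:nuisance_varying}.
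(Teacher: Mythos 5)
Your high-level plan — a symmetric pair of data-generating processes that share the same training marginal $\ptr(\mby,\mbx)$ but demand opposite predictors — is the same idea as the paper's proof, which sets up two families $\cF_1,\cF_2$ with $\mbx=[\mbx^*,\mbz]$ and $\mbx=[\mbz,\mbx^*]$ respectively, observes that at correlation strength $\rho=0.9$ both have identical $(\mby,\mbx)$ marginals, and then shows by exhausting all $16$ classifiers on $\{-1,1\}^2$ that only $f(\mbx)=\mbx_1$ beats chance throughout $\cF_1$, yet that same $f$ has accuracy $0$ on $p_{2,0}\in\cF_2$. However, your proposal has a structural contradiction in steps (3)--(6) that you cannot patch while staying inside a \emph{single} family $\cF$: you want $\pte^{(1)},\pte^{(2)}\in\cF$ whose $\mbx$--$\mby$ conditionals are mirror images (step 5), and simultaneously you want $\pind(\mby=1\g\mbx)$ to be $90\%$-accurate on both (step 6). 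These two cannot hold together: if accuracy adds to at most $1$ across $\pte^{(1)}$ and $\pte^{(2)}$ for every predictor, then in particular the $\pind$-based predictor cannot reach $0.9$ on each. The reason is that within one nuisance-varying family the decomposition $p(\mby,\mbx^*)\,p_D(\mbz\g\mby)\,p(\mbx\g\mbz,\mbx^*)$ fixes the semantic channel, so the ``direction'' of the semantic signal in $\mbx$ is common to all members; no pair of members is a perfect mirror. The symmetry you need must live \emph{across two different families} that disagree about which coordinate of $\mbx$ is the semantic one, and the existential quantifier on $\cF$ in the theorem is then discharged by choosing whichever of $\cF_1,\cF_2$ the algorithm fails on. This is exactly what the paper does, and it also sidesteps the second difficulty you flagged: making $\pind$ reach $90\%$ while the training marginal hides $\mbx^*$. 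In the paper's construction $\mbx^*$ is never hidden by $p(\mbx\g\mbz,\mbx^*)$ (it is literally one coordinate of $\mbx$); it is hidden only in the sense that at $\rho=0.9$ the two coordinates have identical distributions given $\mby$, so the learner cannot tell them apart. Your attempted fix --- making $\ptr(\mbz\g\mby)$ degenerate and hoping $\mbx^*$ is ``swamped'' --- is not needed and would break the $90\%$ claim for $\pind$ since the importance weight $\ptr(\mby)/\ptr(\mby\g\mbz)$ becomes ill-defined when $\mbz$ determines $\mby$.
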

\looseness=-1
The proof is in \cref{appsec:thm} and proceeds in two steps.
With $\text{ACC}_{h}(p)$ as the expected accuracy of a model $h$ on distribution $p$, the first step of the proof defines two nuisance-varying families $\cF_1, \cF_2$ such that no single model can perform well on both families simultaneously;
any $h(\mbx)$ for which $\text{ACC}_{p_1}(h) > 50\%$ for all $p_1\in \cF$ will have that $\text{ACC}_{p_2}(h) < 50\%$ for some $p_2\in{\cF_2}$ and vice-versa.
The second step shows that the two families $\cF_1, \cF_2$ have a member that has the same distribution over $\mby,\mbx$; letting the training data come from this distribution means that any learning algorithm that returns a performant model --- one that beats $50\%$ accuracy --  on one family must fail to return a performant model on the other.
Next, we discuss different methods that use additional knowledge beyond $\mby,\mbx$ to build robust predictors.

\subsection{\Acrlongpl{scam}.}
We focus on methods that correct models using knowledge of nuisances or where they might appear in the covariates \citep{mahabadi2019end,liu2021just,puli2021predictive}.
We first establish that the common central part in these methods is a model that predicts the label using nuisances, which we call the \textit{{biased model}}; due to this commonality, we call these {\glsreset{scam}\nmeths{}}.
At a high level, a \nmeth{} has two components.
The first is a biased model that is built to predict the label by exploiting the nuisance-label relationship via extra knowledge or assumptions.
The biased model is then used to guide a second model to predict the label without relying on nuisances.

We briefly summarize the different \nmeths{} here, differentiated by the additional knowledge they use to build biased models.
The differences between the methods are summarized in \cref{tab:nmeth-summary}.
We give details for \gls{nurd} here and defer algorithmic details about the rest to \cref{appsec:background}.

\paragraph{Biased models from knowledge of the nuisances.}
The first category of \nmeths{} from \citet{mahabadi2019end,puli2021predictive} \textit{assumes additional knowledge in the form of nuisance annotations $\mbz$}.
For example, in \gls{nli} ---  where the goal is determining if a premise sentence entails a hypothesis ---  
\citep{mahabadi2019end} compute the fraction of words shared between the hypothesis and the premise for each sample in the training data and use this as one of the nuisance features in building the biased model.
The biased model in \gls{nurd}, \gls{poe}, \gls{dfl} is learned by predicting the label from the nuisance annotations in the training data to estimate $p_{tr}(\mby \g \mbz)$.
Using nuisance annotations, \citet{puli2021predictive, makar2021causally} use the model $p_{tr}(\mby \g \mbz)$ as the biased model to define importance weights and minimize risk w.r.t a distribution $\pind$ obtained as
$$ \pind(\mby , \mbz, \mbx)= \ptr(\mby) \ptr(\mbz) p(\mbx\g \mby, \mbz) = \frac{ p(\mby) }{ \ptr( \mby \mid \mbz )  } \ptr(\mbz) \ptr(\mby\g \mbz) p(\mbx\g \mby, \mbz)  =  \frac{ p(\mby) }{ \ptr( \mby \mid \mbz )  } \ptr (\mby, \mbz, \mbx) . $$
The second step in \gls{nurd} \citep{puli2021predictive} trains a model to predict $\mby$ from a representation $r(\mbx)$ on data from $\pind$ such that $\mbz\indep_\pind \mby\g r(\mbx)$; this step is called distillation.
Due to $\mby\indep_\pind \mbz$, learning in $\pind$ avoids features that depend only on the nuisance and due to $\mbz\indep_\pind \mby\g r(\mbx)$, distillation avoids features that are mixed functions of the label and the nuisance  (e.g. $\mbx_1=\mby + \mbz$).
With these insights, \gls{nurd} builds models of the form $\pind(\mby\g r(\mbx))$ that are most informative of the label.
Mechanically, \gls{nurd}'s distillation solves this:
\[\max_{\theta,\gamma}  \mathbf{E}_{ \pind}  \log p_\theta( \mby \mid r_\gamma(\mbx) ) - \lambda \mathbf{I}_{ \pind } (\mby; \mbz \mid r_\gamma(\mbx)).\]
\citet{puli2021predictive} show that such models are best in a class of predictors with lower bounds on performance.
The mutual information above is zero when $\mby \indep_\pind \mbz \g \mbx$; this condition holds for semantic corruptions as we discuss in \cref{appsec:background}.
Thus, we run the distillation step as importance-weighted \gls{erm} on the training data.

\citet{mahabadi2019end} consider two methods to train a {biased} model and a base predictive model jointly to make the base model predict without relying on the biases.
They propose 1) \gls{poe}, where the loss is the sum of the \texttt{log} loss of the two models and 2) \gls{dfl}, where the biased model is used to weight the cross-entropy loss for the base model.
For both methods, \citet{mahabadi2019end} build a biased model as $\ptr(\mby\g \mbz)$.
Intuitively, the base model focuses on classifying samples that the biased model misclassifies.
The methods fine-tune a BERT model \citep{Devlin2019BERTPO} and do not propagate the gradients of the biased model to update the common parameters (token embeddings).

\begin{table*}[t]
\centering
 \begin{small}
    \caption{Summary of \gls{nurd}, \gls{jtt}, \gls{poe}, and \gls{dfl}.
Each method approximates the {biased model}: $\ptr(\mby\g \mbz)$.
This table describes the different biased models, their names, how they are built.
  }
    \label{tab:nmeth-summary}
  \centering
    \begin{tabular}{p{1.3cm}p{2.9cm}p{4cm}p{4cm}}
    \toprule
          Method &  Name & What the biased model is & Assumptions/Knowledge
      \\
    \midrule
       \gls{jtt}
            &  
            Identification model 
            & 
            $\ptr(\mby\g \mbx)$ learned via \gls{erm}  
            & 
            \gls{erm} learns biased models. 
      \\
            \gls{poe}/\gls{dfl} 
            &  
            Biased model 
            & 
            $\ptr(\mby\g \mbz)$ learned via \gls{erm}  
            & 
			$\mbz$ from domain-knowledge. 
      \\
            \gls{nurd}
            &  
            Weight model 
            & 
            $\ptr(\mby\g \mbz)$ learned via \gls{erm}  
            & 
			$\mbz$ from domain-knowledge. 
      \\
      \bottomrule
\end{tabular}
\end{small}
\end{table*}

\paragraph{Biased models from assumptions on \gls{erm}-trained models.}
The second category of \nmeths{} like LFF \citep{nam2020learning}, UMIX \citep{han2022umix}, and \gls{jtt} \citep{liu2021just} require \textit{additional knowledge that vanilla \gls{erm} builds a biased model that exploits the nuisance-label relationship}.
Given such a model, these works use it to reduce a second model's dependence on the nuisance.
We focus on \gls{jtt}~\citep{liu2021just} which aims to build models robust to group shift, where the relative mass of a fixed set of disjoint groups of the data changes between training and test times.
The groups here are subsets of the data defined by a pair of values of discrete label and nuisance values.
While \gls{jtt} works without relying on training group annotations, i.e. without nuisances, it assumes \gls{erm}'s missclassifications are because of a reliance on the nuisance.
\gls{jtt} first builds an ``identification'' model via \gls{erm} to isolate samples that are misclassified.
Then, \gls{jtt} trains a model via \gls{erm} on data with the loss for the misclassified samples upweighted (by constant $\lambda$).
The epochs to train the identification model and the upweighting constant are hyperparameters that require tuning using group annotations \citep{liu2021just}.

\paragraph{The commonality of a biased model.}
The central part in \gls{nurd}, \gls{dfl}, \gls{poe}, and \gls{jtt} is a model that predicts the label using nuisances, like $\ptr(\mby \g\mbz)$, which we call the {biased model} as in \citet{he2019unlearn}.
The predictive models in each \gls{scam} are guided to not depend on nuisances used by the biased model.
While \glspl{scam} reduce dependence on nuisances, they build biased models using additional nuisance annotations or require assumptions that \gls{erm}-trained models predict using the nuisance.
In the next section, we describe an alternative: corrupt semantic information with data augmentations to construct biased models.

\section{Out-of-distribution generalization via Semantic Corruptions} \label{sec:semantic_corruptions}
The previous section summarized how biased models can be built in \nmeths{} using either direct knowledge of nuisances or knowledge that \gls{erm}-trained models rely on the nuisances.
We now introduce semantic corruptions and show how they enable building biased models.
Semantic corruptions are transformations of the covariates that do not retain any knowledge of the semantics, except what may be in the nuisance $\mbz$:
\begin{thmdef}[Semantic Corruption]
\label{def:semantic_corruption}
A semantic corruption is a transformation of the covariates $T(\mbx, \mbdelta)$, where $\mbdelta$ is a random variable such that $\mbdelta \indep (\mby, \mbz,\mbx, \mbx^*)$, if
    \[\forall \, p_D\in \cF \quad T(\mbx, \mbdelta) \indep_{p_D} \mbx^* \g \mbz.\]
\end{thmdef}

Here, we characterize conditions under which biased models built from semantic corruptions could be used to estimate robust models.
As discussed in \cref{sec:nmeths}, $\pind(\mby\g \mbx)$ is the optimal risk-invariant predictor, and is the target of \gls{erm} when predicting the label $\mby$ from $\mbx$ under the nuisance-randomized distribution $\pind$.
\Gls{nurd} estimates this distribution as part of the algorithm, and methods like \gls{jtt} aim to approximate $\pind$, for example, upweighting samples mis-classified by a model that relies on $\mbz$ to predict $\mby$.
We compare $\pind$ which is obtained by breaking the nuisance-label relationship 
against the distribution obtained by breaking the relationship between the label and the data augmentation : 
\[\pind(\mby, \mbx) = \int_z \frac{\ptr(\mby)}{\ptr(\mby\g \mbz=z)}\ptr(\mby, \mbz=z,\mbx), \qquad\quad p_T(\mby, \mbx) = \int_\delta p(\mbdelta=\delta) \frac{\ptr(\mby)}{\ptr(\mby\g T(\mbx, \delta))}\ptr(\mby, \mbx) d \delta.\]
We show here that the $L_1$ distance between $\pind(\mby, \mbx)$ and $p_T(\mby, \mbx)$ is controlled by an $L_2$-distance between the biased models built from the nuisance and the data augmentations respectively:
\newcommand{\propone}{Let $T:\mbX\times \mbR^d \rightarrow \mbX$ be a function. Assume the r.v. ${\ptr(\mby\g T(\mbx, \mbdelta))}^{-1}$ has a bounded second moment under the distribution $\pind(\mby, \mbz, \mbx)p(\mbdelta)$, and that $\ptr(\mby\g T(\mbx, \mbdelta))$ and $\ptr(\mby \g \mbz)$ satisfy
 \[\E_{\pind(\mby, \mbz, \mbx)p(\mbdelta)}{\ptr(\mby\g T(\mbx, \mbdelta))^{-2}} \leq m^2,\quad  \quad \E_{\pind(\mby, \mbz, \mbx)p(\mbdelta)} \left|\ptr(\mby\g T(\mbx, \mbdelta)) - \ptr(\mby\g \mbz)\right|^2 = \epsilon^2.\]
Then, the $L_1$ distance between $\pind(\mby, \mbx)$ and $p_T(\mby, \mbx)$ is bounded: $\|\pind(\mby, \mbx) - p_T(\mby, \mbx)\|_{1} \leq m\epsilon$. For a semantic corruption that also satisfies $\mby \indep_{p_{tr}} \mbz \g T(\mbx, \mbdelta)$ the inequalities hold with $\epsilon=0$.
}
\begin{prop}\label{thm:estimation}
\propone{}
\end{prop}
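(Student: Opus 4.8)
The plan is to bound $\|\pind(\mby,\mbx) - p_T(\mby,\mbx)\|_1$ by writing both densities as integrals against $\ptr(\mby,\mbz,\mbx)$ (resp. $\ptr(\mby,\mbx)p(\mbdelta)$) with reweighting factors $\ptr(\mby)/\ptr(\mby\g\mbz)$ and $\ptr(\mby)/\ptr(\mby\g T(\mbx,\mbdelta))$, subtracting, and pulling the difference of the two weight functions out. First I would establish the key algebraic identity that $p_T(\mby,\mbx)$, as defined in the display, can be rewritten as $\int_\delta p(\mbdelta=\delta)\,\frac{\ptr(\mby)}{\ptr(\mby\g T(\mbx,\delta))}\,\ptr(\mbz,\mbx\g\mby)\,\ptr(\mby)\,d\delta$ — i.e. the same structure as $\pind$ but with the nuisance weight replaced by the corruption weight — so that after integrating out $\mbz$ (which the corruption weight does not depend on given $\mbx$) the two have a common integral representation. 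Concretely, for fixed $(\mby,\mbx)$,
\[
\pind(\mby,\mbx) - p_T(\mby,\mbx) = \int_{z,\delta} p(\mbdelta=\delta)\,\ptr(\mby)\left(\frac{1}{\ptr(\mby\g z)} - \frac{1}{\ptr(\mby\g T(\mbx,\delta))}\right)\ptr(\mby,\mbz=z,\mbx)\,dz\,d\delta .
\]

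Next I would take absolute values, integrate over $(\mby,\mbx)$, and recognize the resulting expression as an expectation under $\pind(\mby,\mbz,\mbx)p(\mbdelta)$: specifically, since $\ptr(\mby)\ptr(\mby,\mbz,\mbx)/\ptr(\mby\g\mbz) = \pind(\mby,\mbz,\mbx)\cdot(\text{something})$ — I would double-check the precise normalization, but the point is the weighting turns $\ptr$-integration into $\pind$-integration — we get
\[
\|\pind(\mby,\mbx) - p_T(\mby,\mbx)\|_1 \le \E_{\pind(\mby,\mbz,\mbx)p(\mbdelta)}\left[\left|\frac{\ptr(\mby\g T(\mbx,\mbdelta)) - \ptr(\mby\g\mbz)}{\ptr(\mby\g T(\mbx,\mbdelta))}\right|\right],
\]
where I have multiplied and divided by $\ptr(\mby\g T(\mbx,\mbdelta))$ to expose both hypotheses of the proposition. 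The final step is Cauchy–Schwarz on this expectation, splitting the integrand as $\ptr(\mby\g T(\mbx,\mbdelta))^{-1}$ times $|\ptr(\mby\g T(\mbx,\mbdelta)) - \ptr(\mby\g\mbz)|$, which gives $\le \sqrt{\E[\ptr(\mby\g T(\mbx,\mbdelta))^{-2}]}\cdot\sqrt{\E[|\ptr(\mby\g T(\mbx,\mbdelta)) - \ptr(\mby\g\mbz)|^2]} \le m\epsilon$. For the last sentence of the statement: if the corruption additionally satisfies $\mby\indep_\ptr\mbz\g T(\mbx,\mbdelta)$, then combined with $T(\mbx,\mbdelta)\indep_\ptr\mbx^*\g\mbz$ from \cref{def:semantic_corruption} one shows $\ptr(\mby\g T(\mbx,\mbdelta)) = \ptr(\mby\g\mbz)$ almost surely — the conditional independences force the two predictors to agree — so $\epsilon=0$.

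The main obstacle I anticipate is the bookkeeping in the first two steps: getting the change-of-measure constants exactly right so that the $\ptr$-weighted integrals really do collapse to a clean $\pind p(\mbdelta)$-expectation, and in particular checking that $p_T$ as written admits the integral-over-$\mbz$ representation that matches $\pind$'s structure (this uses that $T(\mbx,\delta)$ is a deterministic function of $(\mbx,\delta)$, so the corruption weight factors through $\mbx$ and can be pulled inside the $\mbz$-integral without disturbing $\ptr(\mbz\g\mby,\mbx)$). Once that identity is in hand, the triangle inequality and Cauchy–Schwarz are routine, and the bounded-second-moment assumption is exactly what licenses the final Cauchy–Schwarz step. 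The independence-to-equality argument for the $\epsilon=0$ case is also slightly delicate and I would want to spell out which conditional independences are invoked and why they yield $\ptr(\mby\g T(\mbx,\mbdelta))=\ptr(\mby\g\mbz)$ rather than merely a weaker relation.
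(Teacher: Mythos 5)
Your proposal matches the paper's proof step for step: the same change-of-measure decomposition expressing $\pind(\mby,\mbx)-p_T(\mby,\mbx)$ as an integral of the difference of reciprocal reweighting factors against $\ptr(\mby)\ptr(\mby,\mbz,\mbx)p(\mbdelta)$, the same collapse of the $\ptr$-weighted integral into a $\pind(\mby,\mbz,\mbx)p(\mbdelta)$-expectation of $\ptr(\mby\g T(\mbx,\mbdelta))^{-1}\left|\ptr(\mby\g T(\mbx,\mbdelta))-\ptr(\mby\g\mbz)\right|$, the same Cauchy--Schwarz split, and the same invocation of $\ptr(\mby\g T(\mbx,\mbdelta),\mbz)=\ptr(\mby\g\mbz)$ (from the semantic-corruption property) together with $\mby\indep_\ptr\mbz\g T(\mbx,\mbdelta)$ to force $\ptr(\mby\g T(\mbx,\mbdelta))=\ptr(\mby\g\mbz)$ a.s.\ and hence $\epsilon=0$. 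The bookkeeping you flag as a potential obstacle does work out exactly as you anticipate, so this is essentially the paper's argument.
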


If $\epsilon=0$, $p_T(\mby,\mbx)=\pind(\mby,\mbx)$ which means that almost surely the conditionals match $\pind(\mby\g \mbx) = p_T(\mby\g \mbx)$.
Then, as $\pind(\mby\g \mbx)$ is the optimal risk-invariant predictor, so is $p_T(\mby\g \mbx)$.
More generally, standard domain adaptation risk bounds that are controlled by the total variation distance  between source and target \citep[Theorem 1]{ben2010theory} bound the risk of a model under $\pind$ using the $L_1$ bound $m\epsilon$ --- which upper bounds the total variation --- 
and the risk under $p_T$.

Without nuisance annotations, one cannot test whether estimate the $L_2$-distance between the two biased models $\ptr(\mby\g \mbz)$ and $\ptr(\mby\g T(\mbx, \mbdelta))$ in \cref{thm:estimation}.
This distance can be large when a transformation $T(\mbx,\mbdelta)$ retains semantic information.
To avoid, we turn to a complementary source of knowledge: semantic features.
Using this knowledge, we design families of data augmentations that corrupt the semantic information in $\mbx$ to construct {semantic corruptions}.
Focusing on two popular \gls{ood} tasks, object recognition and \gls{nli}, we use \textbf{only semantic knowledge} to build corruptions that retain some aspects of the covariates.
Biased models built on such corruptions will depend on any retained nuisances; more retained nuisances mean better biased models.

\subsection{Semantic corruptions via permutations}

We first build corruptions when semantics appear as global structure.
We give an intuitive example for such {global} semantics.
Consider the waterbirds dataset from \citet{sagawa2019distributionally} with waterbirds and landbirds appearing predominantly on backgrounds with water and land respectively.
Semantic features like the wing shape and the presence of webbed feet are corrupted by randomly permuting small patches.
See \cref{fig:birds-pr}.
Formally, given subsets of the covariates $\mbx_1, \cdots \mbx_k$ extracted in an order, global semantics $e(\mbx_1, \cdots, \mbx_k)$ change with the order of extraction.
Formally, with a random permutation $\pi\sim q(\pi)$ and recalling that semantics are $\mbx^*=e(\mbx)$, the information about semantics is lost after permutation: $\forall \pd, \mbI_{\pd,q(\pi)}(\mbx^* ; e(\mbx_{{\pi(1)}}, \cdots \mbx_{{\pi(k)}}))) =0$.

We give an example of a semantic corruption with global semantics.
Consider distributions $\{\pd\}_{D\in \mathbf{R}}$  with different nuisance-label relationships.
With $\cU$ as the uniform distribution over $\{1,2,3\}$, and $\cN$ as the normal distribution, $\pd(\mby, \mbz, \mbx)$ corresponds to $\mby \sim \cU$, $\mbz \sim \cN(D\mby,1),$ and $\mby$ selecting a configuration of $\mbx$
\begin{align*}
        % \\
\mby& =1 \implies \mbx = [ -\mbz, \mbz, \mbz],
%, \\  
\qquad 
    \mby =2 \implies \mbx = [ \mbz, -\mbz, \mbz],
%, \\
\qquad
    \mby =3 \implies \mbx = [ \mbz, \mbz, -\mbz]
\end{align*}
The index of the negated coordinate is the semantic feature $\mbx^*$ that equals $\mby$ and computing it requires comparing coordinates: $\mby=1$ if $\mbx_2 \mbx_3 > 0$, $\mby=2$ if $\mbx_1 \mbx_3>0$, and $\mby=3$ otherwise.
In words, the semantic feature is global.
However, $\mbz = \mbx_1 + \mbx_2 + \mbx_3$ is determined regardless of where the negative sign is, i.e.\ it is not global.
A random permutation $T(\mbx, \mbdelta)$ of the coordinates in $\mbx$ is thus a semantic corruption: as $T(\mbx, \mbdelta)$ permutes the location of the negation, $T(\mbx,\mbdelta) \g \mby, \mbz$ is distributed identically to $T(\mbx,\mbdelta) \g \mbz$.
In turn, $T(\mbx, \mbdelta) \indep \mby\g \mbz$.
Further, the product of the three coordinates of $T(\mbx,\mbdelta)$ determines $\mbz$: $(\Pi_{i\in \{1,2,3\}}T(\mbx,\mbdelta)_i)^{\nicefrac{1}{3}} = - \mbz.$
Thus, $T(\mbx,\mbdelta)$ determines $\mbz$ and $\mby \indep \mbz \g T(\mbx, \mbdelta)$. 
These two independencies imply that $\epsilon=0$ in \cref{thm:estimation}.
Then, biased models from $T(\mbx)$ are as good as ones from $\mbz$.
Next, we give corruptions for global semantics in vision and language tasks, that retain non-global features.

\paragraph{\Acrlong{pr}.}
Object recognition tasks where the object is a shape that can satisfy the global property.
For illustration, consider differentiating cows and penguins in natural images; here, shape is a global semantic feature that structures multiple patches.
Permuting patches via \textit{\glsreset{pr}\gls{pr}}, like in \cref{fig:birds-pr}, corrupts global semantics.

\paragraph{\Acrlong{nr}.}
Tasks like \glsreset{nli}\gls{nli} --- where the goal is determining if a premise sentence entails a hypothesis --- satisfy the global-semantics property.
Consider this example: the sentence "Bob speaks but Jon does not" contradicts "Jon speaks but Bob does not" but entails "Bob speaks".
The meaning is inferred from a global structure on the words and the order they appear in.
Here, randomizing the order of the words corrupts the semantics:
For example, one randomized order of the sentence "Jon speaks but Bob does not" is "Bob speaks but Jon does not"; the former entails "Jon speaks" but the latter contradicts it.
We randomize the order
by permuting different $n$-grams in each sentence; we call this \textit{\glsreset{nr}\gls{nr}}.
 
\begin{figure*}[t]
\vspace{-10pt}
\centering
\begin{subfigure}{.49\textwidth}
  \centering
  \includegraphics[width=0.9\linewidth]{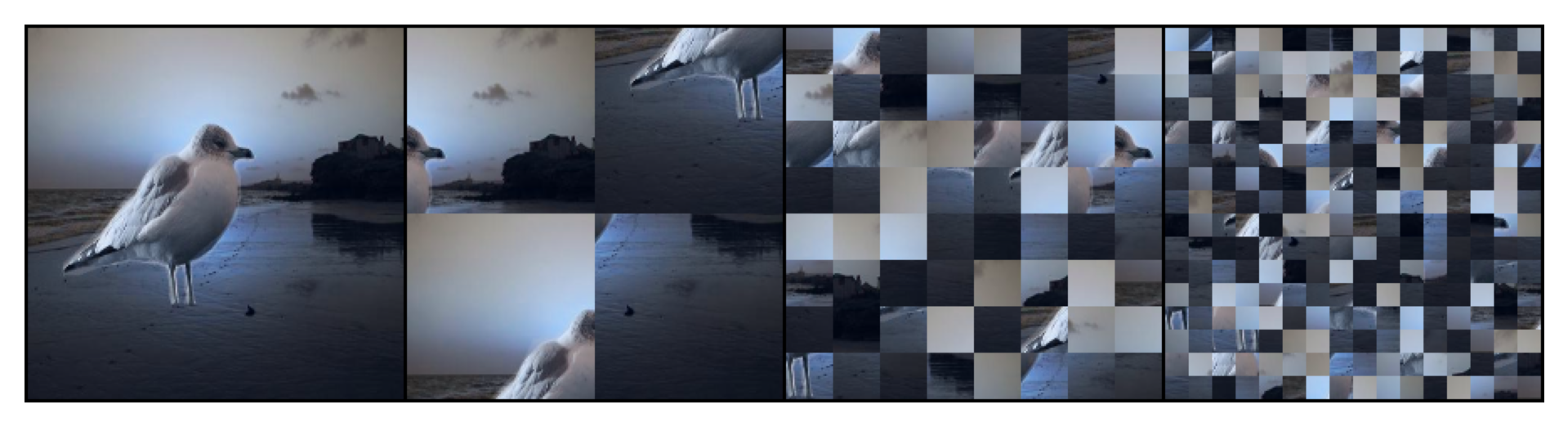}
  \caption{ \gls{pr} to corrupt global semantics in Waterbirds. The original is the left most, followed by \glspl{pr} with sizes $112,28,14$.
    At sizes $>28$, shape is hard to make out.}
  \label{fig:birds-pr}
\end{subfigure}%
\hspace{5pt}
\begin{subfigure}{.49\textwidth}
  \centering
  \includegraphics[width=0.9\linewidth]{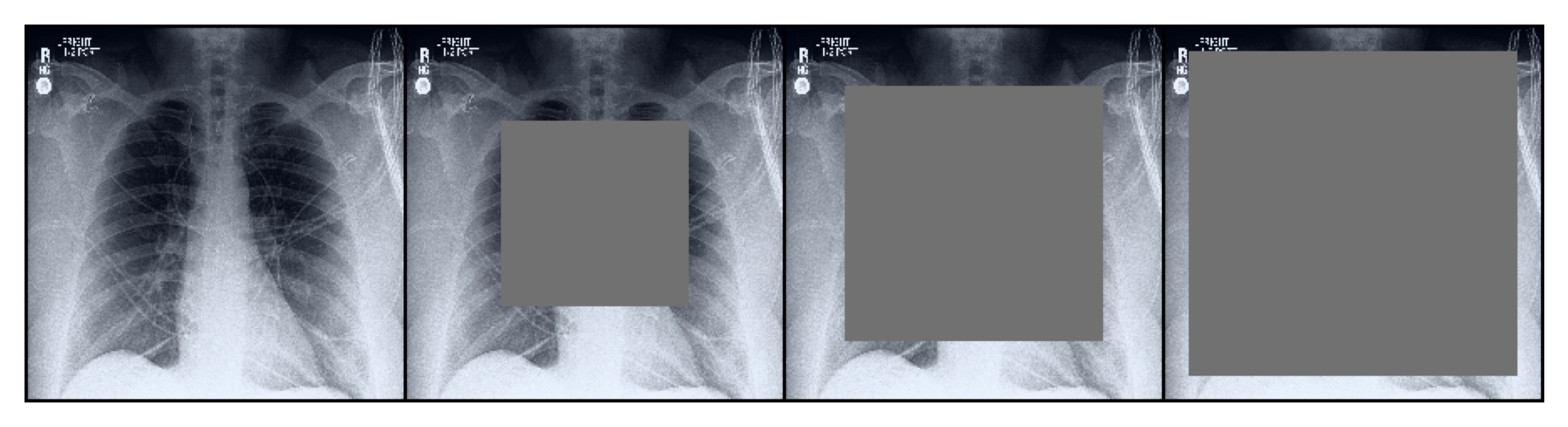}
  \caption{Masking to corrupt semantics in chest X-rays.
    The original is the left most, followed by  \acrshort{rm} of size $112,154,196$.
    At sizes $>154$, the heart is blocked out.}
  \label{fig:xray-masking}
\end{subfigure}
\vspace{-10pt}
\caption{Semantic corruptions of Waterbirds via \gls{pr} and chest X-rays via \acrshort{rm}.}
\label{fig:pr-and-mask}
\vspace{-5pt}
\end{figure*}
  
\subsection{Semantic corruptions via masking} \label{sec:masking}

The second corruption we build focuses on cases where certain subsets of the covariates are necessary part of semantics. 
Masking, by removing that subset or setting it to a constant, corrupts semantics.
Formally, we corrupt the semantics by replacing subsets $\mbx_S$ with a value that is out of support: for example, in images where pixels lie in $(0,1)$, we corrupt $\mbx =[\mbx_S, \mbx_{-S}]$ as $\mbx_{\text{corrupted}}=[0*\mbx_S, \mbx_{-S}]$.
As an illustrative example, consider a family $\cF=\{\pd\}_{D\in R}$ with varying nuisance-label relationships.
With $\mba,\mbb$ being uniform binary random variables, $\mbe(\rho)$ as the exponential distribution with parameter $\rho$, and $s_+(u) = \log(1 + \exp(u))$ as softplus, $\pd(\mby, \mbz, \mbx)$ describes:
\begin{align}
    \mby &= \mba \oplus \mbb, \,\qquad  \mbz\sim \mbe(s_+(D*(2\mby-1))), \qquad \mbx = [(2\mba-1)\mbz, (2\mbb-1)\mbz].
\end{align}
For such a family, we show that masking out coordinate $\mbx_1$ is a semantic corruption: $T(\mbx) = [0,\mbx_2]$ satisfies $T(\mbx) \indep \mby \g \mbz$ and $T(\mbx) \nindep \mbz$.
First, due to $\mby$ being computed as an XOR function of $\mba, \mbb$, it holds that $\mbb \indep \mby$.
Then, due to $\mbz$ only relying on $\mby$ and exogenous noise, $\mbb \indep (\mby, \mbz)$ which implies $\mbb\indep \mby \g \mbz$. 
Given $\mbz$, $\mbb$ determines $\mbx_2$, so $\mbb\indep \mby \g \mbz \implies \mbx_2 \indep \mby \g \mbz \implies T(\mbx) \indep \mby\g \mbz$.
Further, $\|T(\mbx)_2\| = \mbz$ which means $\mby \indep \mbz \g T(\mbx)$.
These two independencies imply that $\epsilon=0$ in \cref{thm:estimation}.
Then, using $T(\mbx)$ to build biased models is equivalent to building them with $\mbz$.

\paragraph{ROI-masking for object recognition.}
Semantics in images can often be localized to a \gls{roi}. 
For example, in detecting cardiomegaly, the \gls{roi} is the chest where the heart resides.
Masking out the \gls{roi} removes centrally located semantic information from the chest X-ray (\cref{fig:xray-masking}).
We call this \textit{\gls{rm}}.

\paragraph{Premise-masking for NLI.}

Semantic features in \gls{nli} rely on the meanings of the premise and the hypothesis sentences: for example, the premise states the occurrence of an event (``Alice sat while Bob stood.'') which can entail (``Alice sat.'') or contradict (``Bob sat.'') the hypothesis.
The information about the setup in the premise is therefore crucial to detect entailment or contradiction.
If the context given by the premise is blocked out, the hypothesis sentence can predict the label only due to nuisances.
Thus, masking the premise is a semantic corruption for \gls{nli} that retains hypothesis features; we call this \textit{\gls{pm}}.

{ 
\subsection{Semantic corruptions via frequency and intensity filters}

\Gls{pr} relies on differences in relative size and \gls{rm} relies on differences in spatial position.
We consider two aspects of the image that are not spatial, frequency and pixel-intensity, and give corruptions for features that depend on these aspects.
Semantics can appear as signals in a particular region of the frequency spectrum, or appear at a particular luminosity in the image.
For example, consider detecting cardiomegaly from chest X-rays, where the heart appears as an object formed of bright pixels with little variation in intensity across the pixels; the latter suggests that the heart features are low-frequency signals.

This observation motivates corruptions along the axes of frequency and pixel-intensity: \textit{\gls{ff}} and \textit{\gls{if}}.
\Gls{ff} zeroes-out frequencies in the discrete fourier domain, while {\gls{if}} zero-out pixels based on their intensities.
See \cref{fig:xray-alts} for how \gls{ff} and \gls{if} corrupt the heart region.
\gls{ff} and \gls{if} require characterizing semantic features in frequency and intensity spaces; this is in contrast to \gls{rm} that is based on characterizing the position in pixel space that the semantics occur in.
}

\begin{figure}
\centering
\begin{subfigure}{.47\textwidth}
  \centering
  \includegraphics[width=0.9\linewidth]{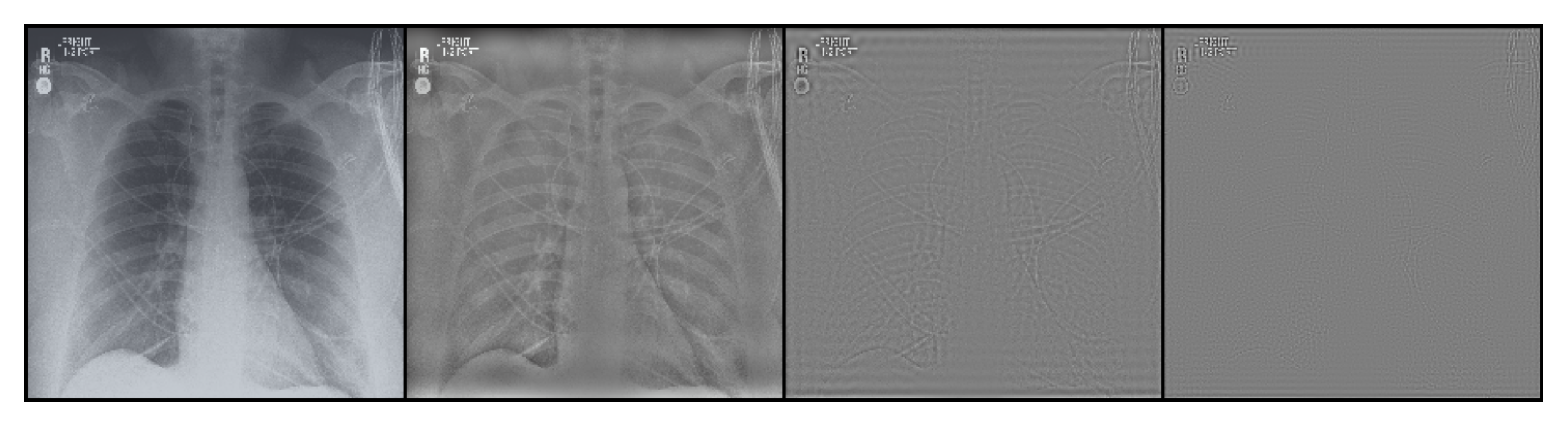}
  \caption{Corruption via \gls{ff}. Original image to the left followed zeroing out $14,56, 112$ of the lowest frequencies. The heart features are corrupted at $56$.
  }
  \label{fig:freq}
\end{subfigure}%
\hspace{10pt}
\begin{subfigure}{.47\textwidth}
  \centering
  \includegraphics[width=0.9\linewidth]{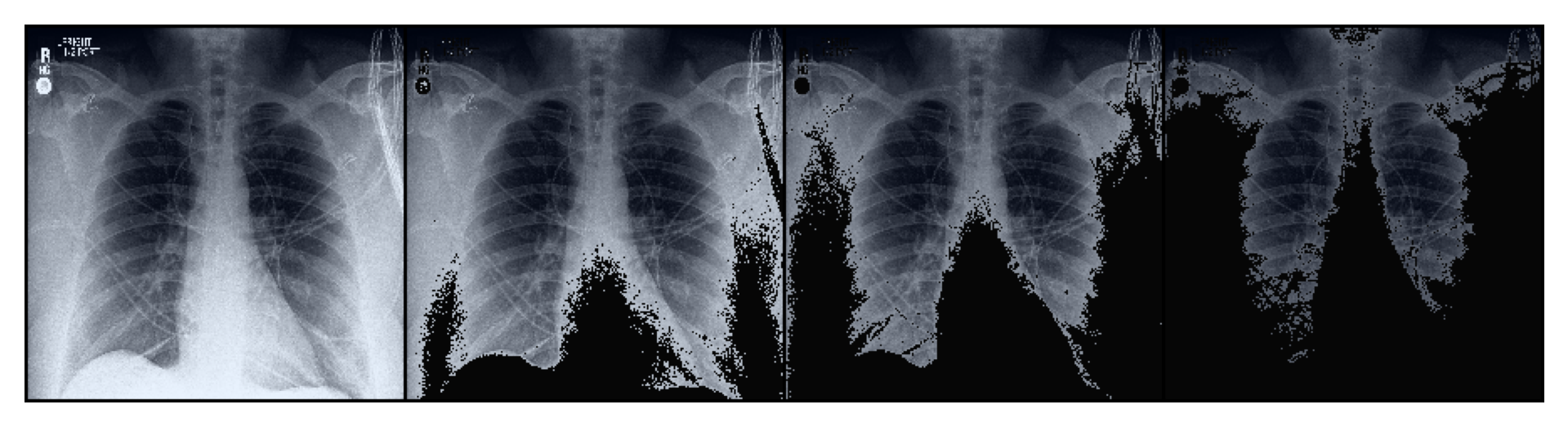}
  \caption{Corruption via \gls{if}. Original image to the left followed by zeroing out pixels with intensities above the $80\%, 60\%, $40\%. Heart features look corrupted at $40\%$.}
  \label{fig:intensity}
\end{subfigure}
\caption{Semantic corruptions of chest X-rays via \gls{ff} and \gls{if} respectively.}
\label{fig:xray-alts}
\vspace{-5pt}
\end{figure}

\subsection{Using semantic corruptions in practice}\label{sec:corruptions-in-practice}

For each method in \cref{tab:nmeth-summary}, we use a semantic corruption $T(\mbx)$ in building a model $\ptr(\mby \g T(\mbx))$.
For reweighting-\gls{nurd}, we replace $\ptr(\mby\g \mbz)$ with $\ptr(\mby \g T(\mbx))$, for \gls{dfl} and \gls{poe}, we replace the model $\ptr(\mby \g \mbz)$ with $\ptr(\mby\g T(\mbx))$, and for \gls{jtt}, we use $\ptr(\mby \g T(\mbx))$ as the identification model.

\textbf{Choosing the corruption parameter.}
To corrupt with \gls{pr}, \gls{nr}, and \gls{rm}, \gls{ff}, one must select a size parameter and to corrupt with \gls{if}, one must specify an intensity threshold.
For \gls{nurd}, \gls{jtt}, \gls{poe} and \gls{dfl}, we select corruption parameters with the same validation schemes used to select other hyperparameters in each respective paper.
In practice, including the \glspl{scam} run without semantic corruptions in the \gls{scam}'s validation scheme ensures a lower bound on performance.
For example, for \gls{jtt}, this inclusion yields a lower bound that corresponds to vanilla \gls{jtt}'s performance.
We also report results for all corruption parameters in \cref{appsec:all-results}, showing that all semantic corruptions except \gls{if} are not sensitive to the parameters, and lead to models that outperform \gls{erm}.

\section{Experiments}\label{sec:exps}

We study semantic corruptions in powering \gls{nurd} \citep{puli2021predictive}, \gls{jtt} \citep{liu2021just}, and \gls{poe} and \gls{dfl} \citep{mahabadi2019end}. 
To be faithful to the original evaluations of each method, we run them on tasks from their respective papers: \gls{nurd} on waterbirds, \gls{jtt} on waterbirds and \gls{nli} where the nuisance is the presence of a negation word, and \gls{poe} and \gls{dfl} on \gls{nli} evaluated on a challenging test dataset, HANS \citep{mccoy2019right}.
We run \gls{nurd} on chest X-rays but focus on detecting cardiomegaly rather 
than the original pneumonia \citep{puli2021predictive} because pneumonia detection even with known-nuisances is not performant.
See \cref{appsec:exps} for details and \cref{appsec:all-results} for additional experiments investigating semantic  corruptions.

\paragraph{Methods, metrics and model selection.}
For images, we corrupt semantics with \gls{pr}, a central \gls{rm}, \gls{ff}, and \gls{if}.
To show the value of semantic corruptions relative to existing data augmentations, we also consider two baseline transformations of images.
The first is random cropping (\textsc{rand-crop}) like in self-supervised learning \citep{bardes2021vicreg,chen2020simple} where patches of random sizes are sampled, covering $\geq 0.08$ fraction of the image.
 The second is adding gaussian noise (\textsc{gauss-noise}).
 For text, we corrupt semantics with \gls{nr} and \gls{pm}.
We report the average test accuracy for every method.
To be able to compare to what \gls{jtt} is trained for in \citet{liu2021just}, we report worst-group test accuracy for \gls{jtt}.
For each method, we compare the performance of the original method to that of the methods run with semantic corruption (including the baselines).
For the corruption-powered versions, group annotations and nuisances are \textit{unavailable} in the training data.
Known-nuisance versions of \gls{poe}, \gls{dfl}, and \gls{nurd} use direct knowledge of one or more nuisances during training.
In choosing parameters and early stopping, like \citet{liu2021just} do with vanilla \gls{jtt}, corruption-powered \gls{jtt} uses validation group annotations.
For the other methods,
we follow validation schemes from the respective papers: for \gls{nurd} we follow \citet{puli2021predictive} and use a validation set weighted to have independent nuisance and label, and for \gls{poe}/\gls{dfl}, we follow \citet{mahabadi2019end} and use a set of $1000$ samples from the HANS training dataset.

\subsection{Object recognition tasks}

\begin{wraptable}[24]{r}{0.34\textwidth}
\centering
\vspace{-14pt}
 \caption{Mean and standard error of test accuracy across $10$ seeds of \gls{nurd} with semantic corruptions on waterbirds.
 \textit{Known}-$\mbz$ \gls{nurd} uses a label for the type of background as the nuisance.
Consider the gap between \gls{erm} and known-nuisance \gls{nurd}.
\Gls{nurd} with semantic corruptions \gls{pr}, \gls{rm}, \gls{ff}, and \gls{if} close $99\%,99\%,82\%,99\%$ of the gap respectively.
\Gls{nurd} with semantic corruptions outperforms \gls{erm} and \gls{nurd} with \textsc{rand-crop}, \textsc{gauss-noise}.
 }
    \label{tab:nurd-wb-results}
     \centering
  \vspace{-16pt}
\begin{tabular}{lc}
\\
    \toprule
      Method &  test acc. \\
    \midrule
        \textit{Known}-$\mbz$ \gls{nurd}
        &  $87.2 \pm 1.0 \%$ 
\\
	\midrule
\gls{pr}          
        &  $  86.9 \pm 1.2 \% $
    \\
\gls{rm}          
        &  $  86.9 \pm 1.7 \%$
    \\
\gls{ff}          
        &  $  83.5 \pm 1.1  \%$
    \\
\gls{if}          
        &  $  86.9 \pm 1.1  \%$
    \\
	\midrule   
\textsc{rand-crop}       
        &  $  73.7 \pm 2.0  \%$
    \\
\textsc{gauss-noise}        
        &  $  82.0 \pm 2.6  \%$
    \\
	\midrule    
            \gls{erm} 
            & $68.0 \pm 1.9\%$
	\\
	\bottomrule
\end{tabular}
%\vspace{-3pt}
\end{wraptable} 
To be faithful to the original evaluations of each method, we evaluate \gls{jtt} on waterbirds, and \gls{nurd} on both waterbirds and detecting cardiomegaly; both tasks have images of size $224\times 224\times 3$.
    Both \citet{puli2021predictive} and \citet{liu2021just} use the raw waterbirds data from \citet{sagawa2019distributionally},
    where the task is detecting the type of bird (water or land) from images where the background is a nuisance.
    Unlike \citet{liu2021just}, \citet{puli2021predictive} process the waterbirds to get a different setup from \citet{sagawa2019distributionally}.
    To stay true to the original evaluations of the methods, we recreate the setups as described in their respective papers.
For both tasks, we use \gls{pr} (of patch sizes $7,14,28,56$), \gls{rm} (of mask sizes $112,140,168,196$), \gls{ff} (of high-pass filter sizes $196,168,140, 112$), and \gls{if} (of thresholds $0.1,0.2,0.3,0.4$) as semantic corruptions.
For \textsc{gauss-noise}, we use variances $0.01, 0.25, 1, 4$.

\begin{wraptable}[17]{r}{0.35\textwidth}
%\begin{table}
\centering
\vspace{-13pt}
    \caption{
    Test worst-group (WG) accuracies of \gls{jtt} on waterbirds.
\gls{jtt} with semantic corruptions outperforms \gls{erm}, vanilla \gls{jtt}, and \gls{jtt} with baseline corruptions (\textsc{rand-crop}, \textsc{gauss-noise}).
}
    \label{tab:jtt-wb-results}
  \centering
  \vspace{-16pt}
\begin{tabular}{lc}
\\
    \toprule
      Method &  test WG acc. \\
    \midrule
        \textit{Vanilla} \gls{jtt} 
        &  $86.5 \%$ 
\\
	\midrule
\gls{pr}          
        &  $ 89.0\%$ 
    \\
\gls{rm}          
        &  $ 88.2\%$ 
    \\
\gls{ff}          
        &  $ 87.2\%$ 
    \\
\gls{if}          
        &  $ 87.0\%$ 
    \\
	\midrule   
\textsc{rand-crop}       
        &  $ 75.0\%$ 
    \\
\textsc{gauss-noise}        
        &  $ 71.0\%$ 
    \\
	\midrule    
            \gls{erm} 
            & $72.0\%$
	\\
	\bottomrule
\end{tabular}
%\vspace{-3pt}
\end{wraptable}

\paragraph{\Gls{nurd} on waterbirds.}

For \gls{nurd}, we recreate the waterbirds experiment from \citet{puli2021predictive} where the full waterbirds data from \citet{sagawa2019distributionally} was subsampled into training, validation, and test datasets each with label balance.
However, unlike \citet{sagawa2019distributionally}, the validation data comes from the same distribution as the training data.
The training and validation datasets have $90\%$ waterbirds on backgrounds with water and $90\%$ landbirds on backgrounds with land.
The test data has a flipped relationship.
Known-nuisance \gls{nurd} uses an additional label denoting the background-type as the nuisance.

\Cref{tab:nurd-wb-results} gives results. Selecting hyperparameters using \gls{nurd}'s validation approach gives sizes $14$ for \gls{pr} ($86.9\%$), $196$ for \gls{rm} ($86.9\%$), $168$ for \gls{ff} ($83.5\%$), and threshold $0.2$ for \gls{if} ($86.9\%$).
Consider the gap between \gls{erm} and known-nuisance \gls{nurd}.
\gls{nurd} with \gls{pr}, \gls{rm}, \gls{ff}, and \gls{if} close $99\%,99\%,82\%,99\%$ of the gap respectively.
\gls{nurd} with these semantic corruptions outperforms \gls{erm} ($68.0\%$) and \gls{nurd} with \textsc{rand-crop} ($73.7\%$) and \textsc{gauss-noise} ($82.0\%$).
Additionally, in \cref{tab:nurd-wb-results-full} in \cref{appsec:exps}, we give the results for all corruption parameters, showing that \gls{nurd} with semantic corruptions is \emph{insensitive to hyperparameters of the corruption} and outperforms \gls{erm}.
In \cref{appsec:remark-on-baselines}, we discuss how the baseline \textsc{gauss-noise} could close $80\%$ of the gap between \gls{erm} and known-$\mbz$ \gls{nurd}.

\paragraph{JTT on waterbirds.}

For \gls{jtt}, we repeat the waterbirds experiment from \citet{liu2021just}
which uses the original data from \citet{sagawa2019distributionally}.
The training data has $95\%$ waterbirds on backgrounds with water and $95\%$ landbirds on backgrounds with land.
Both the validation and test datasets have bird label independent of the background.
The groups here are subsets of the data that correspond to a pair of values of bird-type and background-type.
Like vanilla \gls{jtt}, we use group annotations in the validation data to compute worst-group error and early stop training when using \gls{pr} and \gls{rm}.
The results for vanilla \gls{jtt} are from our run using the optimal hyperparameters from \citet{liu2021just}.

\begin{wraptable}[22]{r}{0.35\textwidth}
%\begin{table}
\centering
\vspace{-14pt}
    \caption{
  Mean and standard error of test accuracy over $10$ seeds of \gls{nurd} on chest X-rays.
   \textit{Known}-$\mbz$ \gls{nurd} uses the hospital as the nuisance.
Consider the gap between \gls{erm} and known-$\mbz$ \gls{nurd}.
\gls{nurd} with \gls{pr}, \gls{rm}, \gls{ff}, and \gls{if} close $72\%,82\%,65\%,35\%$ of the gap respectively.
Except with \gls{if}, \gls{nurd} with semantic corruptions outperforms \gls{erm} and \gls{nurd} with baseline corruptions.
  }
    \label{tab:xray-results}
     \centering
\vspace{-17pt}
\begin{tabular}{lc}
\\
    \toprule
      Method & test acc. \\
    \midrule
        \textit{Known}-$\mbz$ \gls{nurd}
        &  $81.7 \pm0.3 \%$ 
\\
	\midrule
\gls{pr}   
            &   $ 77.0 \pm 1.2 \%$ 
    \\
\gls{rm}          
            &   $ 78.7 \pm 0.3 \%$ 
    \\
\gls{ff}  
            &  $ 76.0 \pm 0.6 \%$ 
    \\
\gls{if}          
            &  $ 71.0 \pm 1.0 \%$ 
    \\
	\midrule   
\textsc{rand-crop}       
        &  $ 59.9 \pm 2.1 \%$
    \\
\textsc{gauss-noise}        
        &  $ 69.0 \pm 1.9 \%$ 
    \\
	\midrule    
            \gls{erm}  
            & $65.3 \pm 1.1 \%$
	\\
	\bottomrule
\end{tabular}
\end{wraptable}

\Cref{tab:jtt-wb-results} shows the results.
Selecting the corruption hyperparameters on the validation worst-group accuracy gives size $14$ for \gls{pr} ($89\%$), size $196$ for \gls{rm} ($88.2\%$), size $112$ for \gls{ff} ($87.2\%$), and threshold $0.4$ for \gls{if} ($87.0\%$).
\Gls{jtt} with these semantic corruptions outperforms \gls{erm} $(72.0\%)$, vanilla \gls{jtt} ($86.5\%$), and \gls{jtt} with the baseline corruptions \textsc{rand-crop} ($75\%$) and \textsc{gauss-noise} ($71\%$).
Additionally, \cref{tab:appsec-jtt-wb-results} shows that \gls{jtt} with \gls{pr} and \gls{rm} outperforms \gls{jtt} with the baseline corruptions and \gls{erm} at every patch/border-size.

\paragraph{\Gls{nurd} on detecting cardiomegaly}

In chest X-ray classification, differences between hospitals, like the scanners used to produce the X-rays, are known to correlate thoracic conditions with non-physiological aspects in the image; for example, only some scanners render the air in the lungs in white \citep{zech2018variable}.
We consider the shape-based object recognition task of cardiomegaly (an irregularly sized heart) detection and, following \citet{puli2021predictive}, construct a dataset from two chest X-ray datasets: chexpert \citep{irvin2019chexpert} and MIMIC \citep{johnson2019mimic}.
The training and validation datasets have $90\%$ cardiomegaly images from MIMIC and $90\%$ healthy images from chexpert, while the test data has a flipped relationship.
Known-nuisance \gls{nurd} uses hospital identity as the nuisance.

See~\cref{tab:xray-results} for results.
Selecting the corruption parameters using \gls{nurd}'s validation approach gives size $14$ for \gls{pr} ($77\%$), size $196$ for \gls{rm} ($78.7\%$), size $168$ for \gls{ff} ($76.0\%$), and threshold $0.1$ for the \gls{if} ($71.0\%$).
 Consider the gap between \gls{erm} and known-nuisance \gls{nurd}.
\gls{nurd} with \gls{pr}, \gls{rm}, \gls{ff}, and \gls{if} close $72\%,82\%,65\%,35\%$ of the gap respectively.
\gls{nurd} with all semantic corruptions, outperforms \gls{erm} ($65.3\%$) and \gls{nurd} with  the baselines \textsc{gauss-noise} ($69\%$) and \textsc{rand-crop} ($59.9\%$).
Additionally, we report results for all corruptions in \cref{tab:nurd-wb-results-full} in \cref{appsec:exps} showing that \gls{nurd} with \gls{pr} and \gls{rm} \emph{are insensitive to  hyperparameters} and outperform \gls{erm}.

\subsection{\glsreset{nli}\Gls{nli}}

\begin{wraptable}[20]{r}{0.34\textwidth}
\centering
\begin{small}	
\vspace{-13pt}
 \caption{Mean and standard deviation of accuracies (over $4$ seeds) on the HANS dataset.
The results for \gls{poe} and \gls{dfl} that use known nuisances are given under \textit{known}.
        \gls{poe} with  \gls{nr} (\textsc{nr}) performs better than known-nuisance \gls{poe}. \gls{dfl} with (\textsc{nr}) closes ${84}\%$ of the gap between \gls{erm} and known-nuisance \gls{dfl}.
\Gls{poe} and \gls{dfl} with \gls{pm} (\textsc{pm}) close $33\%$ and $28\%$ of the gap between \gls{erm} and the respective method with known $\mbz$.
  }
    \label{tab:nli-results}
 \vspace{-15pt}
  \centering
    \begin{tabular}{lc}
    \\
    \toprule
Method  & 
HANS test acc. \\
    \midrule
        \gls{poe},      \textit{known}-$\mbz$  
        &  $66.3 \pm 0.6 \%$ 
\\
        \gls{poe}, \textsc{nr}
        & $ 66.7  \pm 1.5 \%$ 
\\
        \gls{poe}, \textsc{pm}
        & $ 64.5 \pm 1.9 \%$ \\
\midrule
        \gls{dfl}, \textit{known}-$\mbz$
        & $69.3 \pm 0.2\%$

\\
 \gls{dfl}, \textsc{nr}
        & $68.4  \pm 1.5\%$ 
\\
 \gls{dfl}, \textsc{pm}
        & $ 65.2 \pm 0.7\%$ 
\\
\midrule
 \gls{erm}
        & $63.6  \pm 1.1\%$  \\
	\bottomrule
\end{tabular}
\end{small}
\vspace{-10pt}
\end{wraptable}

For methods \gls{poe}, \gls{dfl}, and \gls{jtt}, we use the MNLI dataset \citep{N18-1101} to fine-tune a BERT model.
The evaluations of these methods in their respective papers have different nuisances and, consequently, different test sets. 
In accordance, we describe the setup and the results separately.
We use \gls{nr} (sizes $1,2,3,4$) to produce nuisances for both setups.

\paragraph{PoE and DFL}
For \gls{poe} and \gls{dfl}, we report test accuracies on the HANS dataset \cite{mccoy2019right} as in \citet{mahabadi2019end}.
HANS was created to test the reliance of models on three known nuisances: 1) lexical overlap, 2) subsequence match, and 3) constituent matching subtrees in the parse trees.
Known-nuisance \gls{poe} and \gls{dfl} use exact knowledge of these nuisances.

\Cref{tab:nli-results} gives the mean test accuracies over $4$ seeds.
For both \gls{dfl} and \gls{poe}, selecting the size hyperparameter based on the average accuracy on a small subset of the HANS training data ($1000$ samples) gives $n=3$.
With this size, \gls{poe} achieves $66.7\%$, improving over \gls{poe} with known nuisances ($66.3\%$).
\gls{dfl} with \gls{nr} of size 3, achieves $68.4\%$, closing ${84}\%$ of the gap between \gls{erm} and known-nuisance \gls{dfl} ($69.3\%$).

\Gls{poe} and \gls{dfl} with \gls{pm} (\textsc{pm}) close $33\%$ and $28\%$ of the gap between \gls{erm} and when the methods have knowledge of $\mbz$.
We expect the methods with \gls{nr} do better than with \gls{pm} because the latter corrupts nuisances like lexical overlap between premise and hypothesis that HANS focuses on.
Additionally, we give results for all $n$-gram sizes in \cref{tab:nli-results-full} in \cref{appsec:exps}, showing that \gls{poe} and \gls{dfl} beat \gls{erm} for all $n$-gram sizes.
Further, in \cref{appsec:added-evaluations}, we evaluate \gls{poe} and \gls{dfl} models on the ANLI \citep{nie2019adversarial} dataset and counterfactually-augmented data \citep{kaushik2019learning} in \cref{tab:anli_results,tab:cad_results}.

\begin{wraptable}[11]{r}{0.35\textwidth}
\centering
\vspace{-14pt}
\caption{\small
Worst-group and average test accuracies of \gls{jtt} on \gls{nli}.
\gls{jtt} with \gls{pm} (\textsc{pm}) and \gls{nr} (\textsc{nr}) outperforms vanilla \gls{jtt} and \gls{erm}.
  }
    \label{tab:jtt-nli-results}
  \centering
\vspace{-14pt}
    \begin{tabular}{ccc}
    \\
    \toprule
	 &  Worst-group &   Avg. \\
    \midrule
        \textit{Vanilla} \gls{jtt} &  $71.3 \%$ & $79.1\%$   
	\\
       \gls{jtt} +  \textsc{pm}  &  ${72.1}\%$ & $79.9
        \%$   
    \\
\gls{jtt} +  \textsc{nr}
         &  ${74.3}\%$ & $79.7\%$   
    \\
	\midrule    
            \gls{erm} & $67.9\%$ &  $82.4\%$
	\\
	\bottomrule
\end{tabular}
\end{wraptable}

\paragraph{JTT}

For \gls{jtt}, we repeat the \gls{nli} experiment from \citet{liu2021just}, where the presence of a negation word in the hypothesis sentence is the nuisance.
The groups here are subsets of the data that correspond to a value of the label and whether or not there is a negation word in the hypothesis.
Vanilla \gls{jtt} uses group annotations in the validation data to tune the hyperparameters and early stop training.
For each $n$-gram size, we run \gls{jtt} with \gls{nr} for two values of the number of epochs of training for the identification model: $2,3$.
Following the hyperparameter selection procedure from \citet{liu2021just}, for each $n$-gram size, we give the results for the run with the higher validation worst-group accuracy.
\textit{Vanilla} \gls{jtt} is run with the optimization hyperparameters from \citep{liu2021just}.

\looseness=-1
\Cref{tab:jtt-nli-results} gives the results. 
Selecting the size hyperparameter for \gls{nr} using validation worst-group accuracy, like \citet{liu2021just} do for \gls{jtt}, gives $n=1$ with test worst-group accuracy of ${74.3\%}$, better than vanilla \gls{jtt}'s $71.3\%$.
Additionally, \cref{tab:appsec-jtt-nli-results} shows that \gls{jtt} using \gls{nr} at \textit{every} size or \gls{pm} performs better than both vanilla \gls{jtt} ($71.3\%$) and \gls{erm} ($67.9\%$).

\section{Related work}\label{sec:related}

\glsreset{scam}\Glspl{scam} like
\citep{veitch2021counterfactual,clark2019don,puli2021predictive,he2019unlearn,makar2021causally} assume the nuisance is available as additional knowledge during training.
Semantic corruptions offer a complementary approach to hand-crafting nuisances or obtaining auxiliary labels, by capturing nuisances that remain after corruption (e.g. non-global nuisances remain after \gls{pr}).
\Nmeths{} like LFF \citep{nam2020learning}, UMIX \citep{han2022umix}, and \gls{jtt} \citep{liu2021just} all rely on one crucial assumption: that \gls{erm}-training builds a biased model that exploits the nuisance and use it to reduce a second model's dependence on the nuisance.
Each method trains the second model with weighted cross-entropy with higher weights for samples misclassified by the biased model; the methods differ in how they build the biased model and how they compute the weighted loss.
The biased models learn to predict the label from the covariates.
Such a model can also rely on the semantic features and upweighting its misclassified samples produces data with a different label-semantic relationship from the one in the training data.
Models trained on such data are suboptimal on test data which has the same semantic relationship as the training data.
Using semantic corruptions in these \nmeths{} reduces the biased model's reliance on the semantics and makes the second model rely more on the semantics;
thus, \glspl{scam} that rely on assumptions on \gls{erm}-trained models being biased achieve better performance when using semantic corruptions. 
The experiments in \cref{sec:exps} confirm this empirically: \gls{jtt} with semantic corruptions improves over vanilla \gls{jtt}.

Two instances of semantic corruptions, \gls{pm} and \gls{rm}, appear in earlier work \citep{mahabadi2019end,he2019unlearn,puli2021predictive} but were designed using knowledge of where nuisances appear in the covariates.
\citep{puli2021predictive} used the borders of X-ray images as features that are related only to the scanner type (the nuisance), and not human physiology, to avoid spurious correlations in the detection of cardiomegaly.
For \gls{nli}, \citet{mahabadi2019end} use knowledge that the test set was constructed from samples misclassified by a model that relies on the hypothesis alone to build a biased model using the hypothesis sentence.
These are special cases of \gls{rm} and \gls{pm} from \cref{sec:masking} repsectively.
Our work widely generalizes the observations from the papers above by formally defining and further realizing the abstraction of semantic corruptions in several instances and across applications.

\citet{bahng2020learning} uses \textsc{cnn}s with small receptive fields (RFs), to capture non-global nuisances.
However, their method is typically limited to very small filters because at size 3x3, deep neural networks like \textsc{vgg} detect global semantics like shapes.
In contrast, the size choice in \gls{pr} has no bearing on the choice of the model; we used default vision models.
\citet{LeBras2020AdversarialFO} automatically identify and remove examples with nuisances using adversarial filtering, but risk removing genuinely easy examples.
\citet{qin2021understanding} work solely with vision transformers and point out that nuisances are the only reason labels can be predicted from transformations akin to patch-randomized images.
They propose to encourage transformers to have predictions and representations of the original images be dissimilar from those of patch-randomized ones.
In contrast, our work applies to general flexible models and shows that semantic corruptions can be used to break the label's relationship with nuisances in the original images. 

\cite{yao2022improving,gao2023out} use  additional knowledge about nuisances or environments to corrupt nuisances in the covariates,
\cite{yao2022improving} corrupt nuisances in the covariates via the Mixup \citep{zhang2017mixup} of samples from different domains that share a label.
\cite{gao2023out} directly randomize nuisances; for example, in detecting animals in their natural habitats, they place segmented animal foregrounds onto random habitat backgrounds.
Unlike these methods, we design semantic corruptions using the complementary knowledge about semantics, which can be available even without knowledge about nuisances.
\citet{clark2019don,li2019repair} construct nuisances in the training stage using prior knowledge: for example, \citep{clark2019don} uses the first token of the hypothesis as a nuisance for a synthetic \gls{nli} task which was created to have the first token be spuriously correlated with the label. 
Another example is the VQA task where the question-type is used as the nuisance. The constructed nuisances are then used to build biased (or bias-only) models, or construct per-sample weights to de-bias the loss.
In contrast, we use knowledge about semantics to corrupt them; for example, the order of the words is a semantic feature that is corrupted by randomizing the order.
This construction does not use knowledge of the nuisance.

\citet{sinha2021negative} use techniques like \gls{pr} to restrict supports in self-supervised learning and generative modeling.
\citet{carlucci2019domain} use \gls{pr} images to encourage a model to recover semantic structure.
In contrast, we use \gls{pr} to corrupt semantics and build biased models that rely on the nuisances, which help build predictive models that avoid reliance on nuisances.
We focus on corrupting semantic features using simple procedures (like permuting, masking, filtering) while papers \citep{kaushik2019learning,teney2020learning,feder2022causal,kaushik2020explaining,eisenstein-2022-informativeness,wang2021robustness,wang2020identifying} focus on perturbing semantic features while keeping other features the same. 
 These transformations produce examples of different labels, and are called counterfactuals. These examples are used to counterfactually augment the training data \citep{kaushik2019learning}.
% 
%\looseness=-1
Constructing counterfactuals can be hard. Works like \citep{kaushik2019learning,teney2020learning,feder2022causal,kaushik2020explaining} rely on humans to create counterfactuals because it is difficult to automate semantic perturbation without changing nuisances. For example, consider classifying dogs versus cats. Creating a dog that looks like a specific cat is much harder than removing the cat from the image by masking out those pixels.

Methods like \citep{wang2021robustness,wang2020identifying} construct counterfactuals automatically, but require additional knowledge of how nuisances appear in the text. For example, \citet{wang2021robustness} matches sentences that have opposite labels while sharing most words. The non-shared words would then be considered semantic.
Techniques like the matching one above from \cite{wang2020identifying} are unrealistic beyond the task of sentiment classification. For example, consider the label of entailment or contradiction in NLI. Data samples with entailment as the label that contain negation words are rare. This makes it hard to find a good counterfactual for data samples labeled with contradiction. Further, matching is difficult in other modalities, like images, where covariates are continuous or high-dimensional and live in spaces where metrics are unclear.

% \vspace{-10pt}
\section{Discussion} \label{sec:discussion} 
% \vspace{-5pt}
We study the use of semantic knowledge in models robust to spurious correlations.
In \cref{thm:assumptions}, we show that additional knowledge is necessary to achieve \gls{ood} generalization even when the training and test distributions are coupled in a nuisance-varying family.
Then, \cref{thm:estimation} shows that a biased model built from a transformation of the covariates $T(\mbx, \mbdelta)$ --- that is $\ptr(\mby \g T(\mbx, \mbdelta)$ ---
can power \nmeths{} to avoid nuisances if the biased model $\ptr(\mby \g T(\mbx, \mbdelta))$ is close to $\ptr(\mby\g \mbz)$ in $L_2$ distance. 
There are two scenarios where this distance is large: the transformation does not corrupt semantics and it corrupts nuisances.
We use knowledge of the semantics to design semantic corruptions to avoid the first scenario.
\textit{Since we work without nuisances}, to avoid the second scenario --- that is to choose $T(\mbx, \mbdelta)$ that retain nuisances --- we use standard validation schemes in \nmeths{}.
Using semantic corruptions, practitioners can run different kinds of  \glspl{scam} (\gls{nurd}, \gls{jtt}, \gls{dfl}, \gls{poe}).
Corruption-powered methods like \gls{nurd} and \gls{dfl} perform close to how they would with known nuisances. 
For methods like \gls{jtt}, the corruption-powered versions perform better than their vanilla versions that rely on \gls{erm} on the raw covariates to yield nuisances.

\paragraph{Limitations.}

The quality of any semantic corruption, and thus the quality of the results, depends on the extent to which semantics are destroyed and nuisances are retained.
\Gls{pr} and \gls{nr} are built to corrupt global semantics, and therefore are most suitable for when the nuisances are local.
\Gls{rm} corrupts semantics in the \gls{roi} and \gls{pm} corrupts the semantic context in the premise; these are most suitable for when nuisances lie outside the \glsreset{roi}\gls{roi} or in the hypothesis respectively.
Finally, \gls{ff} and \gls{if} corrupt semantics in particular parts of the frequency and intensity spectrum, and are most suitable for when the nuisances and semantics lie in separate parts of the spectra.
Knowledge about the kind of nuisances present in a dataset can lead to better choices of semantic corruptions. 
Alternatively, one could use standard validation schemes to select a corruption, like we do in \cref{sec:exps}.

When applied blindly, the procedures we describe may retain semantics or corrupt nuisances.
\Gls{pr} and \gls{nr} may corrupt global nuisances and retain local semantics, 
\gls{rm} and \gls{pm} may corrupt nuisances that occur in the same region as the semantics, 
and \gls{ff} and \gls{if} may corrupt both semantics and nuisances if they appear at similar frequencies or intensity.
For example, when \gls{pr} is used blindly on covariates with non-global semantics, the biased model may rely on said semantics; this in turn guides the predictive model to ignore these semantics and, thus, lose predictive performance.
Alternatively, when nuisances are global, \gls{pr} may corrupt them.
For example in detecting cows and penguins, other nuisance animals (like dogs) may co-occur with cows more often; \gls{pr} would corrupt this nuisance animal.
Using \gls{pr} in a \nmeth{} for such tasks could lead to non-robust predictive models that rely on corrupted nuisances.

Our experiments suggest that it might be possible to guard against performance degradation due to blind usage of semantic corruptions if the corruption parameter is made a hyperparameter and selected using standard validation schemes. In both classifying waterbirds and \gls{nli}, there exist non-global semantics, like small beaks and individual words, that are not corrupted by \gls{pr} and \gls{nr} respectively. However, in our Waterbirds and \gls{nli} experiments, we show models built using semantic corruptions, with validated size choices, close more than $80\%$ of the gap in test performance between \gls{erm} and the methods that use known nuisances. 
Now, imagine the extreme case of running \gls{nurd}, \gls{poe}, \gls{dfl} with a semantic corruption that destroys all information in the covariates. 
Biased models would predict like random chance, and the resulting predictive models would be no less robust than \gls{erm}. On the other hand, methods like \gls{jtt} perform at least as well as their vanilla versions as long as the validation strategy used in vanilla \gls{jtt} covers the identity function as a corruption.
Future work could consider combining semantic corruptions as a way to better retain of nuisances.
Given the validation strategies for \glspl{scam}, a practitioner can easily validate over both single and hybrid corruptions.

\paragraph{Summary.} Semantic corruptions power \nmeths{} to build models robust to spurious correlations using knowledge about the semantic features.
Additional knowledge is always required to achieve such robustness, and existing work assumes access to nuisance annotations or that \gls{erm}-trained models rely on nuisances.
By developing semantic corruptions, we give an approach to use a new kind of additional knowledge, thereby enlarging the set of tasks where one can build robust models.
As discussed above, our experiments show that using semantic corruptions in \glspl{scam} leads to models more robust than \gls{erm} and \gls{jtt} even when the corruptions may have corrupted some nuisances.
These two properties demonstrate the value of semantic corruptions as a way to build robust models.

\section*{Acknowledgements}

The authors were supported by NIH/NHLBI Award R01HL148248, NSF Award 1922658 NRT-HDR: FUTURE Foundations, Translation, and Responsibility for Data Science, NSF CAREER Award 2145542, Grant ONR N00014-23-1-2634, Apple Scholars in AI/ML PhD fellowship, and Samsung Advanced Institute of Technology (Next Generation Deep Learning: From Pattern Recognition to AI). Nitish Joshi is supported by the NSF Graduate Research Fellowship grant number 1839302.

\bibliographystyle{unsrtnat}
\bibliography{ms}

\newpage
\appendix

%!TEX root = ./ms.tex
\onecolumn

\section{Proofs and Discussion on Semantic Corruptions}\label{appsec:thm}
\setcounter{thm}{0}
\setcounter{prop}{0}

In this section we give the proofs of \Cref{thm:assumptions} and \Cref{thm:estimation}.
The first result shows that even if we know our training and test data are sampled from distributions in a nuisance varying family $\cF$, additional assumptions are required in order to learn a predictor that is robust across the entire family.

\begin{thm}
For any learning algorithm,
there exists a nuisance-varying family $\cF$ where predicting with $\pind(\mby=1 \g \mbx)$ achieves $90\%$ accuracy on all members 
such that given training data $\mby, \mbx$ from one member $\ptr\in \cF$, 
the algorithm cannot achieve better accuracy than predicting at random on some $\pte\in \cF$.
\end{thm}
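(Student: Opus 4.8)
The plan is to carry out the two-step strategy sketched above with an explicit, minimal construction: two ``mirror'' nuisance-varying families in which the roles of two coordinates of $\mbx$ are swapped, so that no single predictor can be robust across both, yet the two families share a training member, which makes them indistinguishable from $(\mby,\mbx)$-data alone.

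\emph{The construction.} Take $\mby$ uniform on $\{0,1\}$ and $\mbx=(\mbx_1,\mbx_2)\in\{0,1\}^2$. For $\cF_1$, declare the semantics to be $\mbx^*=\mbx_1$ with the fixed law $p(\mby,\mbx_1)$ in which $\mbx_1$ equals $\mby$ with probability $0.9$ (a noisy copy of the label with Bayes accuracy $0.9$), declare the nuisance $\mbz=\mbx_2$, let the nuisance--label conditional $p_D(\mbz\g\mby)$ range over all conditional laws on $\{0,1\}$, and let the covariate map be the identity $\mbx=(\mbx^*,\mbz)$. Let $\cF_2$ be the same family with the indices $1$ and $2$ interchanged. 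Both fit \cref{def:nuisance_varying}: each member factors as $p(\mby,\mbx^*)\,p_D(\mbz\g\mby)\,p(\mbx\g\mbz,\mbx^*)$, the semantics are a deterministic function of $\mbx$, and $\mbx\indep\mby\g\mbx^*,\mbz$. Under $\pind$ the nuisance is independent of the label, so $\pind(\mby\g\mbx)$ collapses to the Bayes rule on the semantic coordinate; this rule predicts that coordinate and hence has accuracy exactly $90\%$ on \emph{every} member of the respective family. Finally, the member of $\cF_1$ with $p_D(\mbz=\mby)=0.9$ and the member of $\cF_2$ with $p_D(\mbz=\mby)=0.9$ have the same $(\mby,\mbx)$-law $q$, under which $\mby$ is uniform and $\mbx_1,\mbx_2$ are conditionally independent noisy copies of $\mby$ with flip probability $0.1$.

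\emph{Incompatibility and conclusion.} Next I would show that no predictor $h:\{0,1\}^2\to\{0,1\}$ can have accuracy strictly above $1/2$ on every member of $\cF_1$ and also on every member of $\cF_2$. It is enough to examine the two degenerate members of $\cF_1$ in which the nuisance equals the label, respectively its negation ($\mbx_2=\mby$ and $\mbx_2=1-\mby$ almost surely). Writing $\mathrm{ACC}_p(h)$ on each of these in terms of the four values $h(0,0),h(0,1),h(1,0),h(1,1)$ and doing a short case check shows that $\mathrm{ACC}_p(h)>1/2$ on both forces $h(\mbx_1,\mbx_2)=\mbx_1$; symmetrically, robustness across all of $\cF_2$ forces $h(\mbx_1,\mbx_2)=\mbx_2$. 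These two functions disagree, so no $h$ is robust on both families, and the ``semantic-coordinate'' predictor of one family has accuracy $0$ on the degenerate member of the other. Now run the algorithm on data with law $q$; since $q$ is the $(\mby,\mbx)$-law of a member of $\cF_1$ and of a member of $\cF_2$, its output $h$ has the same distribution in both scenarios. By the incompatibility step, $h$ fails to be robust on at least one of the two families, say $\cF_i$: there is $\pte\in\cF_i$ with $\mathrm{ACC}_{\pte}(h)\le 1/2$. Taking $\cF=\cF_i$, $\ptr$ the member of $\cF_i$ realizing $q$, and this $\pte$ proves the statement, since $\pind$ is $90\%$-accurate on all of $\cF_i$.

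\emph{Main obstacle.} The delicate point is the incompatibility step: the family must be chosen so that ``robust on all members'' genuinely pins the predictor down to the semantic coordinate, leaving no room for ``mixed'' predictors (an XOR, AND, etc.\ of the two coordinates), which is precisely why the two fully degenerate nuisance members and the particular $0.9$ noise level are used. A secondary technical point is handling randomized or finite-sample learners, where one argues at the level of the output distribution --- identical under the two scenarios because the learner sees only $q$ --- and of expected accuracy.
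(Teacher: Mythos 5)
Your proposal is correct and mirrors the paper's proof essentially step for step: it builds the same two ``mirror'' families (differing only in using $\{0,1\}$ in place of $\{-1,1\}$), identifies the same common training member at $p_D(\mbz=\mby)=0.9$, and reduces the incompatibility step to the same two degenerate ($\rho=0$ and $\rho=1$) members, where the short case check you describe is exactly the $16$-row enumeration the paper tabulates. The only minor addition is your explicit remark about arguing at the level of the learner's output distribution, which the paper leaves implicit.
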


\newcommand{\ponerho}{p_{1,\rho}}
\newcommand{\ptworho}{p_{2,\rho}}

\begin{proof}
At a high-level, we setup two nuisance-varying families $\cF_1=\{\ponerho\}, \cF_2 = \{\ptworho\}$ where 
\begin{enumerate}
\item 
There are members of each family that have the same distribution over $(\mby, \mbx)$.
We let this distribution over $\mby, \mbx$ be the training data.
\item 
	Thus looking at this training data alone, no algorithm can tell which family the test distribution will come from.
\item 
Then, the proof concludes by showing any predictor that performs better than the chance on all members of $\cF_1$, will perform worse than chance on a member of $\cF_2$.
\end{enumerate}

\paragraph{Defining the two families.}
We now define two nuisance-varying families $\cF_1 = \{p_{1,\rho}\}$ and $\cF_2 = \{\ptworho\}$.
For $a\in \{-1,1\}$, and $\alpha\in [0,1]$ let $\mbR_\alpha(a)$ be a probability distribution obtained by randomly flipping the sign of $a$ with probability $1-\alpha$:
\begin{align}
r \sim \mbR_\alpha(a) \implies \begin{cases}
p(r = a) = \alpha
\\
p(r = -a) = 1 - \alpha
\end{cases}
\end{align}

Then, define the family $\{p_{1,\rho}\}$ as the distributions resulting from the following sampling process: 
\begin{align*}
\mby & \sim \mbR_{0.5}(1)
\\
\mbz & \sim \mbR_\rho(\mby)
\\ 
\mbx^* & \sim \mbR_{0.9}(\mby)
\\
\mbx &= [\mbx^*, \mbz]
\end{align*}

The second family $p_{2,\rho}$ follows the same process except that the positions of the semantic feature and nuisance are flipped $\mbx=[\mbz,\mbx^*]$.
\textbf{Notice that predicting $\mby$ from $\mbx_1$ in $\cF_1$ and from $\mbx_2$ in $\cF_2$, achieves $90\%$ accuracy.}
In both families,  by construction, the following properties hold 
\[			p_{1,\rho}(\mby) = p_{2,\rho}(\mby)
 \quad \qquad
 		p_{1,\rho}(\mbz , \mby) = p_{2,\rho}(\mbz , \mby), 
 \quad\qquad 
		 p_{1,\rho}(\mbx^* , \mby) = p_{2,\rho}(\mbx^* , \mby), 
\quad \qquad 
		\mbx_1 \indep_{p_{\cdot,\rho}} \mbx_2 \g \mby . \]
If $\rho\not=0.9$, due to the flipping of the positions of $\mbx^*, \mbz$ between $p_{1,\rho}$ and $p_{2,\rho}$,
\[p_{1,\rho}(\mbx_1\g \mby) \not= p_{2,\rho}(\mbx_1 \g \mby) \qquad \qquad p_{1,\rho}(\mbx_2 \g \mby) \not=p_{2,\rho}(\mbx_2 \g \mby).\]
But when $\rho=0.9$, the distributions are the same: $p_{\cdot,\rho}(\mbx_1 \g \mby) \stackrel{\mathclap{\normalfont\mbox{d}}}{=} p_{\cdot,\rho}(\mbx_2 \g \mby) \implies p_{1,0.9}(\mby,\mbx) = p_{2,0.9}(\mby,\mbx).$
With this we let the training data come from $\ptr=p_{1,0.9}$.

\paragraph{Reducing accuracy computation to summing conditional probabilities.}
Now, we express the accuracy of any predictor $f(x_1, x_2) \in \{-1, 1\}$ of $\ponerho$:
\begin{align}
\text{ACC}_f(p_{1,\rho}) & = \E_{p_{1,\rho}(\mby, \mbx_1,\mbx)}\ind[\mby=f(\mbx_1,\mbx_2)] \nonumber
\\
& =  \sum_{x_1,x_2}p_{1,\rho}(\mby=f(x_1, x_2), \mbx_1=x_1, \mbx_2=x_2)  \nonumber
\\
& =  \sum_{x_1,x_2}p_{1,\rho}(\mbx_1=x_1, \mbx_2=x_2 \g \mby=f(x_1, x_2)) p_{1,\rho}(\mby=f(x_1, x_2))  \nonumber
\\
& = 0.5\sum_{x_1,x_2}p_{1,\rho}(\mbx_1=x_1, \mbx_2=x_2 \g \mby=f(x_1, x_2)) 
\label{eq:acc}
 \end{align}
With this expression, we have reduced computing the accuracy of a model $f(x_1,x_2)$ to taking one from a pair of numbers --- either $p_{1,\rho}(\mbx_1=x_1,\mbx_2=x_2\g \mby=1)$ or $ p_{1,\rho}(\mbx_1=x_1,\mbx_2=x_2\g \mby=-1)$  based on what $f(x_1,x_2)$ predicts --- for each possible value of $x_1,x_1\in\{-1,1\}^2$, summing them and multiplying by $0.5$.

\begin{table}[h]
\centering
\begin{tabular}{c|c|c|c|c|c|c|c}
$(x_1,x_2)$ & $(-1,-1)$ & $(-1, 1)$ & $(1, -1)$ & $(1,1)$ & $\text{ACC}_f(p_{1,0})$ acc & $\text{ACC}_f(p_{1,1})$  &  $\min$ \\
\toprule
0 &  1 &   1 &   1 &   1   & $0.50$ &  $0.50$ &  $0.50$ \\ 
1 &  1 &   1 &   1 &  -1   & $0.55$ &  $0.05$ &  $0.05$ \\ 
2 &  1 &   1 &  -1 &   1   & $0.05$ &  $0.55$ &  $0.05$ \\ 
3 &  1 &   1 &  -1 &  -1   & $0.10$ &  $0.10$ &  $0.10$ \\ 
4 &  1 &  -1 &   1 &   1   & $0.95$ &  $0.45$ &  $0.45$ \\ 
5 &  1 &  -1 &   1 &  -1   & $1.00$ &  $0.00$ &  $0.00$ \\ 
6 &  1 &  -1 &  -1 &   1   & $0.50$ &  $0.50$ &  $0.50$ \\ 
7 &  1 &  -1 &  -1 &  -1   & $0.55$ &  $0.05$ &  $0.05$ \\ 
8 & -1 &   1 &   1 &   1   & $0.45$ &  $0.95$ &  $0.45$ \\ 
9 & -1 &   1 &   1 &  -1   & $0.50$ &  $0.50$ &  $0.50$ \\ 
10 & -1 &   1 &  -1 &   1   & $0.00$ &  $1.00$ &  $0.00$ \\ 
11 & -1 &   1 &  -1 &  -1   & $0.05$ &  $0.55$ &  $0.05$ \\ 
$\Longrightarrow$ 12 & -1 &  -1 &   1 &   1   & $0.90$ &  $0.90$ &  $\boldsymbol{0.90}$ \\ 
13 & -1 &  -1 &   1 &  -1   & $0.95$ &  $0.45$ &  $0.45$ \\ 
14 & -1 &  -1 &  -1 &   1   & $0.45$ &  $0.95$ &  $0.45$ \\ 
15 & -1 &  -1 &  -1 &  -1   & $0.50$ &  $0.50$ &  $0.50$\\
\bottomrule
\end{tabular}	
\caption{The $16$ different functions that are possible when predicting a label in $\{-1,1\}$ from $\mbx \in \{-1,1\}^2$.
We compute the accuracies on $p_{1,0},p_{1,1}$ and report the minimum of the two.
The only predictor that achieves better than random chance accuracy (denoted by $\Longrightarrow$) is $f(x_1,x_2) = x_1$.}
\label{tab:enum}
\end{table}

\paragraph{Showing only a semantic predictor can achieve better accuracy than random chance on $\cF_1$.}

Next, we will show that the only way to achieve better accuracy than random chance on every member of $\cF_1$ is to predict with $f(x_1,x_2) = x_1$.
To show this, we will express the accuracy computation for two distributions $p_{1,0}$ and $p_{1,1}$ by constructing a table of values of $p_{1,\rho}(\mbx_1=x_1,\mbx_2=x_2\g \mby=1)$ and $ p_{1,\rho}(\mbx_1=x_1,\mbx_2=x_2\g \mby=-1)$ for $\rho=0$ and $\rho=1$ separately.

\begin{table}[H]
\centering
\begin{tabular}{ll|c|cc}
\multicolumn{2}{c}{}&\multicolumn{2}{c}{$p_{1,1}$}&\\
\multicolumn{2}{c}{}&\multicolumn{2}{c}{$\mbx_1$}&\\
\multicolumn{2}{c}{}& $-1$ & $+1$ &\\
\cline{3-4}
	&&\\
\multirow{2}{*}{$\mbx_2$}& $-1$ & 
$0, 0.9$ & $ 0, 0.1$\\
	&&\\
\cline{2-4}
	&&\\
& $+1$ & $ 0.1,  0$ & $0.9, 0$  & \\
\end{tabular}
\begin{tabular}{ll|c|cc}
\multicolumn{2}{c}{}&\multicolumn{2}{c}{$p_{1,0}$}&\\
\multicolumn{2}{c}{}&\multicolumn{2}{c}{$\mbx_1$}&\\
\multicolumn{2}{c}{}& $-1$ & $+1$ &\\
\cline{3-4}
	&&\\
\multirow{2}{*}{$\mbx_2$}& $-1$ & 
$ 0.1, 0$ & $ 0.9,  0$\\
	&&\\
\cline{2-4}
	&&\\
& $+1$ & $ 0,  0.9$ & $0, 0.1$  & \\
\end{tabular}
\end{table}

By definition of accuracy from \cref{eq:acc}, the accuracy of any predictor $f(x_1,x_2)$ comes down to picking one from the pair of numbers --- left one if prediction if $1$ and right otherwise --- from each element in the table, summing them and multiplying by $0.5$.
There are $16$ possible functions ($2$ possible predictions each for $4$ combinations of $x_1, x_2$) and we enumerate them in \cref{tab:enum}, showing that only $f^*(x_1,x_2) = x_1$ will perform better than chance on both distributions $p_{1,0}$ and $p_{1,1}$.

\paragraph{No predictor can achieve better accuracy than random on both $\cF_1$ and $\cF_2$.}
The earlier parts showed that the only predictor that achieves better accuracy than random chance on all of $\cF_1$ is one that only relies on $\mbx_1$, which equals the semantic feature $\mbx^*$ under $\ponerho$.
However, under $\ptworho$, $\mbx_1$ is the nuisance $\mbz$.
Then, the predictor $f^*(x_1,x_2)=x_1$ has zero accuracy under $p_{2,0}$ because under $p_{2,0}$, we have $\mbz\sim R_0(\mby)$  which means $\mbz\not=\mby$ with probability one:
\begin{align}
\text{ACC}_{f^*}(p_{2,0}) & = \sum_{x_1,x_2}p_{2,0}(\mby=f(x_1, x_2), \mbx_1=x_1, \mbx_2=x_2) = \sum_{x_1,x_2}p_{2,0}(\mby=x_1, \mbz=x_1, \mbx_2=x_2) 
 = 0 
\end{align}
\end{proof}

\subsection{Semantic corruptions, biased models, and proof of \cref{thm:estimation}}\label{appsec:propone}
We give the definition of a semantic corruption here and discuss how it implies alternative intuitive definitions before presenting the proof of \cref{thm:estimation} on using corruptions to build biased models.
\begin{thmdef}[Semantic Corruption]
\label{def:semantic_corruption}
A semantic corruption is a transformation of the covariates $T(\mbx, \mbdelta)$, where $\mbdelta$ is a random variable such that $\mbdelta \indep (\mby, \mbz,\mbx, \mbx^*)$, if
    \[\forall \, p_D\in \cF \quad T(\mbx, \mbdelta) \indep_{p_D} \mbx^* \g \mbz.\]
\end{thmdef}
Two other plausible definitions that come to mind are $T(\mbx, \mbdelta) \indep_\pind \mbx^*$ and that $\mby\indep_\pd T(\mbx, \mbdelta) \g \mbz$.
These are both intuitive properties that can be asked of a semantic corruption that is supposed to discards all information about semantics, provided that the $\mbz$ which we wish to retain holds no information on it (which is the case under $\pind$).  We now show that \cref{def:semantic_corruption} implies these two.

From the definition that if $T(\mbx, \mbdelta)$ is a semantic corruption, then it also holds that $T(\mbx, \mbdelta) \indep_{\pind} \mbx^*$: since $\mbx^*\indep_{\pind} \mbz$
\begin{align}
    \pind(T(\mbx, \mbdelta), \mbx^*) & = \E_{\pind(\mbz)}\pind(T(\mbx, \mbdelta), \mbx^* \mid \mbz) = \E_{\pind(\mbz)}\pind(T(\mbx, \mbdelta) \g \mbz) \pind(\mbx^* \mid \mbz) \\
     & = \pind(\mbx^*) \E_{\pind(\mbz)}\pind(T(\mbx, \mbdelta) \g \mbz)  = \pind(\mbx^*)\pind(T(\mbx,\mbdelta)).
\end{align}
A semantic corruption satisfies the second definition also because
\begin{align}
\label{eq:drop-tx}
\begin{split}
    p_D(\mby \vert T(\mbx), \mbz) &= \int p_D(\mby\vert \mbx^*, T(\mbx), \mbz)p_D(\mbx^* \vert \mbz, T(\mbx)) d\mbx^* = \int p_D(\mby\vert \mbx^*, \mbz)p_D(\mbx^* \vert \mbz, T(\mbx)) d\mbx^* 
    \\ 
    &= \int p_D(\mby\vert \mbx^*, \mbz)p_D(\mbx^* \vert \mbz)d\mbx^* = p_D(\mby \vert \mbz) 
\end{split}
\end{align}
First transition adds in integration over the values of $\mbx^*$, second one uses the property of the nuisance varying family that $\mbx \bot \! \! \bot_{p_D} \mby \vert \mbz, \mbx^*$ and therefore it is also conditionally independent for any $T(\mbx, \mbdelta)$. Then the third transition is due to $T(\mbx, \mbdelta)$ being a semantic corruption.
The next result shows that the more our semantic corruption captures information about the nuisance that is relevant to predicting $\mby$, the better we can approximate learning under $\pind$, which would yield the optimal risk-invariant predictor over $\cF$ \citep{makar2021causally}.

\subsubsection{Proof of \cref{thm:estimation}.}
Now, using the property in \cref{eq:drop-tx} that holds for semantic corruptions, we prove \cref{thm:estimation}.
\begin{prop}
\propone{}
\end{prop}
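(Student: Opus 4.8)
The goal is to bound $\|\pind(\mby,\mbx) - p_T(\mby,\mbx)\|_1$ by $m\epsilon$, where the two distributions are reweightings of $\ptr(\mby,\mbx)$ by $\ptr(\mby)/\ptr(\mby\g\mbz)$ and $\int p(\mbdelta)\ptr(\mby)/\ptr(\mby\g T(\mbx,\mbdelta))$ respectively. The natural starting point is to write the difference of the two densities at a point $(\mby,\mbx)$ as an integral over $\mbdelta$ of $\ptr(\mby)\,\ptr(\mby,\mbx)$ times the difference of reciprocals $\ptr(\mby\g T(\mbx,\mbdelta))^{-1} - \ptr(\mby\g\mbz)^{-1}$ — but there is a subtlety: the first expression $\pind(\mby,\mbx)$ involves $\ptr(\mby\g\mbz)$, which depends on $\mbz$, a variable that has been marginalized out. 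So the first step is to rewrite $\pind(\mby,\mbx) = \int_z \ptr(\mby)\,\ptr(\mby,\mbz=z,\mbx)/\ptr(\mby\g\mbz=z)$ and note that because $\mbdelta\indep(\mby,\mbz,\mbx)$, I can freely insert an integral over $p(\mbdelta)$ into this expression without changing it. Then both $\pind$ and $p_T$ are integrals over the joint $\ptr(\mby,\mbz,\mbx)p(\mbdelta)$, and their difference at $(\mby,\mbx)$ is $\int_{z,\delta}\ptr(\mby)\ptr(\mby,\mbz=z,\mbx)p(\mbdelta=\delta)\big(\ptr(\mby\g T(\mbx,\delta))^{-1} - \ptr(\mby\g\mbz=z)^{-1}\big)$.

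Next I would take absolute values, integrate over $(\mby,\mbx)$, and pull the absolute value inside all the integrals via the triangle inequality, yielding
\[
\|\pind(\mby,\mbx)-p_T(\mby,\mbx)\|_1 \le \int_{\mby,\mbx,z,\delta} \ptr(\mby)\,\ptr(\mby,\mbz=z,\mbx)\,p(\delta)\,\Big|\tfrac{1}{\ptr(\mby\g T(\mbx,\delta))} - \tfrac{1}{\ptr(\mby\g\mbz=z)}\Big|.
\]
The key algebraic manipulation is $\big|a^{-1}-b^{-1}\big| = |a-b|/(ab)$, applied with $a=\ptr(\mby\g T(\mbx,\delta))$ and $b=\ptr(\mby\g\mbz=z)$. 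I then want to absorb one factor of $\ptr(\mby)/b$ to recognize that $\ptr(\mby)\ptr(\mby,\mbz=z,\mbx)/\ptr(\mby\g\mbz=z) = \ptr(\mby)\,\pind(\mby,\mbz=z,\mbx)$-style reweighting — more precisely, $\ptr(\mby)\ptr(\mby,\mbz,\mbx)/\ptr(\mby\g\mbz)$ is exactly the integrand defining $\pind(\mby,\mbz,\mbx)$ (up to the implicit normalization discussed in the paper). So the bound becomes $\E_{\pind(\mby,\mbz,\mbx)p(\mbdelta)}\big[\,|\ptr(\mby\g T(\mbx,\mbdelta)) - \ptr(\mby\g\mbz)| \cdot \ptr(\mby\g T(\mbx,\mbdelta))^{-1}\,\big]$. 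Then Cauchy–Schwarz on this expectation splits it into $\sqrt{\E[\ptr(\mby\g T(\mbx,\mbdelta))^{-2}]}\cdot\sqrt{\E|\ptr(\mby\g T(\mbx,\mbdelta))-\ptr(\mby\g\mbz)|^2} \le m\cdot\epsilon$, which is the claimed bound.

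For the final sentence of the proposition — that a semantic corruption satisfying $\mby\indep_\ptr\mbz\g T(\mbx,\mbdelta)$ has $\epsilon=0$ — I would invoke the identity already established in \cref{eq:drop-tx}, namely $p_D(\mby\g T(\mbx),\mbz) = p_D(\mby\g\mbz)$ for any semantic corruption. Combined with the assumed $\mby\indep_\ptr\mbz\g T(\mbx,\mbdelta)$, i.e. $\ptr(\mby\g T(\mbx,\mbdelta),\mbz) = \ptr(\mby\g T(\mbx,\mbdelta))$, the two give $\ptr(\mby\g\mbz) = \ptr(\mby\g T(\mbx,\mbdelta))$ almost surely under the relevant joint, so the integrand $|\ptr(\mby\g T(\mbx,\mbdelta))-\ptr(\mby\g\mbz)|^2$ vanishes and $\epsilon=0$.

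\textbf{Main obstacle.} The one genuinely delicate point is the bookkeeping around $\ptr(\mby\g\mbz)$ appearing in $\pind(\mby,\mbx)$ even though $\mbz$ is marginalized — i.e., correctly inserting the $\int_z$ and the $\int_\delta$ so that both distributions are written as expectations against the \emph{same} base measure $\ptr(\mby,\mbz,\mbx)p(\mbdelta)$ before differencing. If this is done carelessly one ends up comparing $\ptr(\mby\g T(\mbx,\delta))$ against $\ptr(\mby\g\mbz)$ under mismatched measures and the Cauchy–Schwarz step won't line up with the stated moment assumptions (which are all taken under $\pind(\mby,\mbz,\mbx)p(\mbdelta)$). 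Everything else — $|a^{-1}-b^{-1}|=|a-b|/(ab)$, triangle inequality, Cauchy–Schwarz — is routine.
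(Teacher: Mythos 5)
Your proposal is correct and follows essentially the same route as the paper's proof: insert the $\int_z$ and $\int_\delta p(\mbdelta)$ integrals to write both densities against the common base measure $\ptr(\mby,\mbz,\mbx)p(\mbdelta)$, apply $|a^{-1}-b^{-1}|=|a-b|/(ab)$, absorb $\ptr(\mby)/\ptr(\mby\g\mbz)$ times $\ptr(\mby,\mbz,\mbx)$ into $\pind(\mby,\mbz,\mbx)$, then triangle inequality and Cauchy--Schwarz; and for $\epsilon=0$, combine the identity $\ptr(\mby\g T(\mbx,\mbdelta),\mbz)=\ptr(\mby\g\mbz)$ for semantic corruptions with the hypothesis $\mby\indep_\ptr\mbz\g T(\mbx,\mbdelta)$. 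No meaningful differences.
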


\begin{proof}
The $L_1$ distance between the distributions is bounded from above by a $\pind$-weighted $L_1$ distance between $\ptr(\mby\g\mbz)$ and $\ptr(\mby\g T(\mbx))$, upto a constant:
\begin{align}
\int_{y,x}& \left|\pind(\mby, \mbx) \, - \,  p_T(\mby, \mbx))	\right| dy dx 
\\
& = 
\int_{y,x}  \left|\int_{z} \ptr(\mby) \ptr(\mby, \mbz, \mbx)p(\mbdelta) \left[ \frac{1}{\ptr(\mby\g \mbz)} - \frac{1}{\ptr(\mby\g T(\mbx, \mbdelta))}\right] d z\right|d y d x
\\
& 
 = 
\int_{y,x} \left|\int_{z} \ptr(\mby) \ptr(\mby, \mbz, \mbx)p(\mbdelta) \left[ \frac{\ptr(\mby\g T(\mbx)) - \ptr(\mby\g \mbz)}{\ptr(\mby\g \mbz)\ptr(\mby\g T(\mbx,\mbdelta))} - \right] d z\right|d y d x
\\
& 
 = 
\int_{y,x}  \left| \E_{\ptr(\mbz)p(\mbdelta)}  \frac{\ptr(\mby)}{\ptr(\mby\g T(\mbx,\mbdelta))} p(\mbx\g \mby, \mbz) \left[  \ptr(\mby\g T(\mbx,\mbdelta)) - \ptr(\mby\g \mbz)\right]\right|d y d x
\\
& 
\leq 
\int_{y,x} \E_{\ptr(\mbz)p(\mbdelta)}\left|\frac{\ptr(\mby)}{\ptr(\mby\g T(\mbx,\mbdelta))}  p(\mbx\g \mby, \mbz) \left[ \ptr(\mby\g T(\mbx,\mbdelta)) - \ptr(\mby\g \mbz) \right]\right|d y d x
\\
& 
=
\int_{y,x, z} \ptr(\mbz)\ptr(\mby)p(\mbdelta)p(\mbx\g \mby, \mbz) \frac{1}{\ptr(\mby\g T(\mbx,\mbdelta))}    \left|\ptr(\mby\g T(\mbx,\mbdelta)) - \ptr(\mby\g \mbz)\right|d y d x dz \\
& 
=
\E_{\pind(\mby,\mbz,\mbx)p(\mbdelta)}\frac{1}{\ptr(\mby\g T(\mbx,\mbdelta))}    \left|\ptr(\mby\g T(\mbx,\mbdelta)) - \ptr(\mby\g \mbz)\right|
\\
& 
\leq 
\left(\sqrt{\E_{\pind(\mby, \mbx)p(\mbdelta)}\frac{1}{\ptr(\mby\g T(\mbx, \mbdelta))^2}}\right)\sqrt{\E_{\pind(\mby, \mbz, \mbx)p(\mbdelta)} \left|\ptr(\mby\g T(\mbx,\mbdelta)) - \ptr(\mby\g \mbz)\right|^2}
\end{align}
Substituting the bounds from the theorem statement completes the proof of the bound.

Finally, if $T$ is a semantic corruption, by \cref{eq:drop-tx}, it holds that 
\[\ptr(\mby\g T(\mbx, \mbdelta), \mbz) = \ptr(\mby\g \mbz).\]
Then, if it also holds that $\mby\indep_\ptr \mbz \g T(\mbx, \mbdelta)$, it holds that
\[\ptr(\mby\g T(\mbx, \mbdelta), \mbz) = \ptr(\mby\g T(\mbx, \mbdelta)).\]
Together this implies that almost everywhere in $\ptr(\mby, \mbz, \mbx)p(\mbdelta)$
\[\ptr(\mby\g T(\mbx, \mbdelta)) = \ptr(\mby\g \mbz) \implies \E_{\pind(\mby, \mbz, \mbx)p(\mbdelta)} \left|\ptr(\mby\g T(\mbx,\mbdelta)) - \ptr(\mby\g \mbz)\right|^2 = 0.\]
This shows that for a semantic corruption such that $\mby\indep_\ptr \mbz \g T(\mbx, \mbdelta)$, it holds that $\epsilon=0$.
\end{proof}

\section{Further details about \acrshortpl{scam} and related work}\label{appsec:background}

\paragraph{\Gls{nurd}.}

Focusing on mitigating spurious correlations,
\citet{puli2021predictive} identify a conditional that has performance guarantees on every test distribution within a family of distributions with varying nuisance-label relationships: $\pte \in \cF$.
They develop \gls{nurd} to learn the conditional using data only from $\ptr \not=\pte$.
\gls{nurd} uses 1) the \textit{\nrd}, $\pind(\mby, \mbz, \mbx) = p(\mby) \pind(\mbz)p(\mbx\g \mby, \mbz)$, where $\mbz\indep_\pind \mby$, and 2)  an \textit{uncorrelating representation} $r(\mbx)$ for which
$\mbz\indep_\pind \mby\g r(\mbx)$.
\gls{nurd} builds models of the form $\pind(\mby\g r(\mbx))$ using $r(\mbx)$ that are most informative of the label.

{  We run reweighting-\gls{nurd}, which uses a biased model $\ptr(\mby\g \mbz)$  as an importance weight to compute loss under the \nrd{}: $\pind(\mby,\mbz, \mbx) = \frac{\ptr(\mby)}{\ptr(\mby\g \mbz)}\ptr(\mby,\mbz,\mbx)$.}

To run reweighting-\gls{nurd} with semantic corruptions, we replace $\ptr(\mby\g \mbz)$ with $\ptr(\mby \g T(\mbx))$ for a semantic corruption $T(\mbx)$.
Semantic corruptions are noisy functions of $\mbx$: with noise $\mbeps$ such that $(\mby, \mbz, \mbx)\indep_\pd \mbeps$, $T(\mbx) = U(\mbx, \mbeps)$.
This implies
\begin{align*}
        \mby \indep_\pind \mbeps \g \mbx
    \implies 
        \mby  \indep_\pind \mbx, \mbeps \g \mbx 
     \implies 
            \mby \indep_\pind T(\mbx) \g \mbx 
\end{align*}

Thus, $r(\mbx) =\mbx$ is uncorrelating and $\pind(\mby\g \mbx)$ achieves the optimality guarantees in \citet{puli2021predictive}.
These optimality guarantees imply that regardless of the test nuisance-label relationship, $\pind(\mby\g \mbx)$ will achieve optimal performance within the class of models
like $\pind(\mby\g r(\mbx))$.

{ 
\paragraph{End-to-end bias mitigation.}
\citet{mahabadi2019end} consider two methods to train a \textit{biased} model $\ptr(\mby\g \mbz)$ and a base predictive model jointly to make the base model predict without relying on the biases.
The methods use and fine-tune a BERT model \citep{Devlin2019BERTPO} and do not propagate the gradients of the biased model to update the common parameters (token embeddings in this case).
They propose 1) \gls{poe}, where the \texttt{log} of the product of the predictions (the output probabilities) of the two models is used to compute the classification loss and 2) \gls{dfl}, where the biased model is used to weight the cross-entropy loss for the base model.

The intuition for \gls{poe} is that the samples for which the biased model classifies correctly will not contribute to the gradients of the base model; thus the base model focuses more on classifying samples that the biased model misclassifies.
The \gls{dfl} algorithm weights each sample as the biased model's predicted probability of all but the label, exponentiated with $\gamma > 0$. 
This downweights samples that the biased model classifies correctly which in turn mitigates the base model's reliance on a nuisance which only helps predict the downweighted samples correctly.

Formally, with a biased model $f_\theta(\mbz)$ and a predictive model $f_\gamma(\mbx)$ that output a vector of logits over classes, $\sigma$ denoting the soft-max function that maps logits to class-probabilities, and $\sigma(\cdot)_y$ denoting the softmax-probability of label $y$
\begin{align}
	\textsc{poe} & \quad \max_{\theta, \gamma}
		\sum_{i\in \texttt{training data}}
		\log \sigma(f_\theta(\mbz_i))_{y_i} + 
		\log \sigma(f_\gamma(\mbx_i))_{y_i}
\\
	\textsc{dfl} & \quad \max_{\theta, \gamma}
		\sum_{i\in \texttt{training data}} 
		\left(1 - \sigma(f_\theta(\mbz_i))_{y_i}\right)^\gamma 
		\log \sigma(f_\gamma(\mbx_i))_{y_i}
\end{align}
\citet{mahabadi2019end} build the biased model $f_\theta$ using known nuisances $\mbz$.
We build this model from a semantic corruption $T(\mbx)$.

\paragraph{Just Train Twice (JTT).}
\gls{jtt} works in two stages: 1) build an "identification" model via \gls{erm} on the training data to isolate samples that are misclassified due to reliance on the nuisances and 2) train a model via \gls{erm} on data with the loss for the misclassified samples upweighted (by constant $\lambda$).
{ 
The identification model in \gls{jtt} is built to be a biased model.
When the identification model equals $\ptr(\mby\g \mbz)$, it exactly misclassifies the samples in the groups in the minority group\footnote{The minority group is the set of samples that the nuisance misclassifies. For example, when $\ptr(\mby=\mbz) > \ptr(\mby\not=\mbz)$, then the minority group is the set of samples with $\mby\not=\mbz$ because using only the nuisance results in predicting $\mby=b$ where $\mbz=b$.}.
Upweighting these samples produces a dataset with lesser dependence between the nuisance and the label.
Models learned on the upweighted data depend more on the semantics.
}
See \cref{alg:jtt} for pseudocode.

\begin{algorithm}[ht]
\begin{flushleft}
    \begin{algorithmic}
    \textbf{Input:} Training set $D$ and hyperparameters $T$ and $\lambda_{\text{up}}$.
    \textbf{Stage one: identification}
    
        1. Train identification model $f_\theta$ on $D$ via ERM for $T$ steps. 

        2. Construct the errors set of training examples misclassified by $f_\theta$.
    
    \textbf{Stage two: upweighting identified points}
        
        3. Construct upsampled dataset $D_\text{up}$ containing examples in the error set repeated $\lambda_{\text{up}}$ times and all other examples once.
        
        4. Train final model $f_\gamma$ on $D_\text{up}$ via ERM.
    \end{algorithmic}
    \end{flushleft}
    \caption{\Gls{jtt}.}
    \label{alg:jtt}
\end{algorithm}
In this work, we build the identification model on semantic corruptions i.e. we learn $f_\theta$ to predict $\mby$ from $T(\mbx)$.
The training samples to be upweighted are the ones misclassified when predicting with the identification model on semantic-corrupted versions of the sample, i.e. $T(\mbx)$.
The second stage is run as in \citep{liu2021just} with training data.

\paragraph{Optimization-generalization Dilemma}

Like many other algorithms in the \gls{ood} generalization literature, training \nmeth{s}s based on semantic corruptions may also suffer from obstacles due to optimization and generalization: employing statistical constraints to handle distribution shift may not build models that perform well OOD due to overfitting \citep{wald2022malign}, training difficulties 
\citep{chen2022pareto,zhang2022rich,chen2024understanding}, or reliance on inappropriate inductive biases \citep{nagarajan2020understanding,puli2023don}. Some approaches in the literature can alleviate these difficulties: two-stage methods incorporate the \gls{ood} objective only when training smaller models on top of large ones \citep{chen2022pareto,zhang2022rich,chen2024understanding,yong2023spurious,kirichenko2022last}, subsampling instead of weighting \citep{sagawa2020investigation,idrissi2022simple}, or large $\ell_2$ regularization \citep{sagawa2019distributionally}.

In our implementations we use validation data and regularization to tune parameters for the weighted-\gls{erm} algorithm as proposed in the original papers of the \nmeth{s} we experiment with.
As \gls{erm} is standard practice, there are no new optimization difficulties but generalization difficulties can occur due to overfitting \citep{wald2022malign,puli2023don}.
Any improvements in generalization in weighted-\gls{erm} will lead to improvements in models built by \nmeth{s} with biased models from semantic corruptions.

\section{Further experimental details}\label{appsec:exps}

\subsection{Remark on baseline corruptions}\label{appsec:remark-on-baselines}

\Gls{nurd} with the baseline corruption \textsc{gauss-noise} outperforms \gls{erm} and closes $80\%$ of the gap between \gls{erm} and known-$\mbz$ \gls{nurd} in \cref{tab:nurd-wb-results}.
We explain such an improvement as a consequence of \textsc{gauss-noise} corrupting semantics more than it corrupts nuisances; we explain below.
In tasks like waterbirds, nuisances are present in most if not all patches of the image regardless of where the patches appear.
On the other hand, semantic features are localized to a few adjacent patches (like the birds parts appearing next to each other).
When nuisances are present is many more patches than the semantics, adding gaussian noise to all pixels corrupts semantics more than nuisances.
To see why, consider meausurements of a quantity as a gaussian random variable with the quantity as its mean. More measurements lead to better estimates of the mean.

\begin{figure}[t]
\centering
    \includegraphics[width=0.97\textwidth]{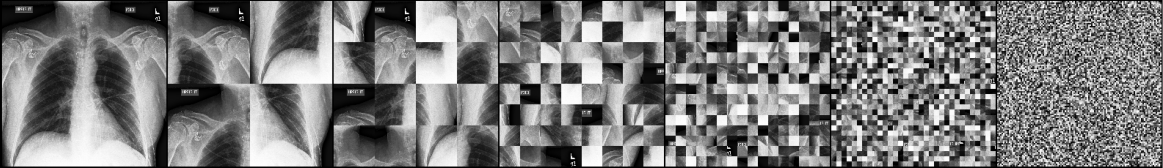}
    \caption{Example of \gls{pr} of a chest X-ray image. The image is followed by \glspl{pr} of size $112, 56, 28, 14, 7, 2$.
        }
    \label{fig:cardio-pr}
    \vspace{-10pt}
\end{figure}

\subsection{Implementation details}
\label{ssec:implementation}

Each experiment in the paper was run on up to 2 RTX8000 GPUs.
The hyperparameters for methods that use known nuisances in the training data, like \gls{nurd}, \gls{poe}, \gls{dfl} are tuned on validation data from the training distribution.
For \gls{nurd}, we select corruption hyperparameters using the mean of the balanced validation accuracy across $10$ seeds.
We do the same when using semantic corruptions.

\paragraph{Experimental details for Waterbirds}
For the \gls{nurd} setup, the training, validation, and test datasets have $ 3020, 756, 800$ samples respectively.
        We use a single architecture to parameterize the predictive model and the weight model in this experiment: two fully connected layers on top of a ResNet18 initialized at weights pretrained on Imagenet.
        We use the same training procedure for \gls{nurd} with known nuisances or with semantic corruptions. 
        Both models are trained with cross-entropy.
        The weight model is optimized with the default Adam optimizer for 20 epochs with a batch size of $64$.
        The predictive model is optimized with the Adam optimizer for 20 epochs with a learning rate of $0.0002$, a weight decay of $0.01$, and a batch size of $250$.

For the \gls{jtt} setup, the training, validation, and test datasets have $4795, 1199, 5794$ samples respectively.
For \gls{jtt}, we use the same model and model parameters as \citet{liu2021just} using their released code.
We repeat the details here for completeness.
The model for both stages of \gls{jtt} is a ResNet-50.
Both models are optimized by stochastic gradient descent (SGD) with momentum $0.9$, weight decay $1.0$, and learning rate $1\times 10^{-5}$.
Both models are trained for 300 epochs with batch size 64, using batch normalization and no data augmentation.
The identification model used to select samples to upweight corresponds to epoch $60$ and the upweighting constant is $\lambda=100$.

\paragraph{Experimental details for cardiomegaly detection.}
The training, validation, and test datasets are fixed across seeds and have $ 18000, 2000, 1000$ samples respectively.
To run reweighting-\gls{nurd}, we use a single architecture to parameterize the predictive model and the weight model in this experiment: two fully connected layers on top of a ResNet18 initialized at weights pretrained on Imagenet.
In known-nuisance \gls{nurd} with the hospital as the nuisance, the biased model is an estimate of $\ptr(\mby \g \text{hospital})$, which is obtained by binning the samples based on the hospital and averaging the labels.
 We use the same training procedure for \gls{nurd} with known nuisances or with semantic corruptions.
 Both weight and predictive models are trained with cross-entropy.
 The weight model and the predictive model are optimized with the Adam optimizer over $25$ epochs with a batch size of $256$, and learning rate $0.001$.

\paragraph{Implementation details for \gls{nli}}

For \gls{poe} and \gls{dfl}, we build classifiers by fine-tuning a pretrained BERT model ~\citep{Devlin2019BERTPO} on the data. We follow the same training procedure and hyperparameter details as used in ~\citet{mahabadi2019end} --- models were trained on the MNLI training dataset which consists of 392k examples, with a learning rate of $2 \times 10^{-5}$ with a batch size of 8 using the Adam Optimizer. All models are trained for 3 epochs. The development set contains 9815 examples and the HANS test contains 30000 examples. Since the HANS dataset has only two labels --- `entailment' and `non-entailment' --- we combine the neutral and contradiction classes during inference on HANS.

For the \gls{jtt} setup, \citet{liu2021just} mix the training and development sets from MNLI and create their own training, validation, and test sets of sizes $206175,  82462, 123712$ respectively.
For \gls{jtt}, we use the same model and model parameters as \citet{liu2021just} using their released code.
We use the optimal hyperparameters reported in \cite{liu2021just} for the learning rate, weight decay, and the upweighting constant.
We repeat the details here for completeness.
The model for both stages of \gls{jtt} is a pretrained BERT model that is finetuned during training.
Both models are optimized by the AdamW optimizer with clipping for the predictive model, no weight decay, and an initial learning rate of $2\times 10^{-5}$.
Both models are trained for $5$ epochs with batch size $32$ and dropout.
The identification model used to select samples to upweight corresponds to epoch $2$ for vanilla \gls{jtt} (reported optimal in \citet{liu2021just}); for \gls{jtt} with semantic corruption, we select one from $2,3$ using validation group annotations.
For both, the upweighting constant is $\lambda=6$.
Our runs with these parameters did not yield the test worst-group accuracy reported in \citep{liu2021just} ($72.6\%$); our experiments yielded a test worst-group accuracy $71.3\%$.
We expect this may be due to the differences in the random seed; \gls{jtt} is sensitive to hyperparameters and differences in order of batches may result in drops in performance.

In \gls{nr}, when the number of words in the sentence is not a multiple of $n$, there will be one $k$-gram ($k < n$).
In implementing \gls{nr}, we ensure that the position of this k-gram is randomized i.e. we make sure that it does not always occur at the end of the sentence, for example. \gls{nr} is implemented before word-piece tokenization (which BERT uses), to ensure that we randomize words instead of subwords.
We also create a small HANS-like development set, which is used to tune the size parameter. This set is constructed by randomly sampling $1000$ examples from the HANS training set, which has zero overlap with the HANS test set.

\subsection{Full results tables and additional experiments}\label{appsec:all-results}

We give the results for all size parameters; see \cref{tab:nurd-wb-results-full},  \cref{tab:nli-results-full}, \cref{tab:xray-results-full}, \cref{tab:appsec-jtt-wb-results}, and \cref{tab:appsec-jtt-nli-results}.
To report the same metrics as in \cite{mahabadi2019end} for \gls{poe} and \gls{dfl} and \cite{puli2021predictive} for \gls{nurd}, we report standard error for \gls{nurd} and standard deviation for \gls{poe} and \gls{dfl} .

\subsubsection{Results on Adversarial NLI \citep{nie2019adversarial} and CAD \citep{kaushik2019learning}}\label{appsec:added-evaluations}

\begin{wraptable}[18]{r}{0.3\textwidth}
\centering
\vspace{-14pt}
    \caption{
    Test worst-group (WG) accuracies of \gls{jtt} on modified waterbirds where the spurious correlation is weaker than the invariant relationship.
Corruption-powered \gls{jtt} outperforms \gls{erm}, vanilla \gls{jtt}, and \gls{jtt} with baseline corruptions
 (\textsc{rand-crop}, \textsc{gauss-noise}) by $\geq 4.4\%$.
}
    \label{tab:addexps-wb-results}
  \centering
\begin{tabular}{lc}
    \toprule
      Method &  test WG acc. \\
    \midrule
        \textit{Vanilla} \gls{jtt} 
        &  $78.6 \%$ 
\\
	\midrule
\gls{pr}          
        &  $ 84.6\%$ 
    \\
\gls{rm}          
        &  $ 85.2\%$ 
    \\
\gls{ff}          
        &  $ 83.2\%$ 
    \\
\gls{if}          
        &  $ 83.0\%$ 
    \\
	\midrule   
\textsc{rand-crop}       
        &  $ 76.2\%$ 
    \\
\textsc{gauss-noise}        
        &  $ 75.9\%$ 
    \\
	\midrule    
            \gls{erm} 
            & $76.1\%$
	\\
	\bottomrule
\end{tabular}
\end{wraptable}

In \cref{tab:anli_results} and \cref{tab:cad_results}, we report evaluations of \gls{poe} and \gls{dfl} models on the adversarial ANLI \citep{nie2019adversarial} and the counterfactually augmented dataset \citep{kaushik2019learning}.

\subsubsection{Additional experiments}

\paragraph{Experiments with weaker spurious correlations.}

To verify the effectiveness of the semantic corruptions for powering \nmeth{s} like \gls{jtt} that rely on assumptions on \gls{erm}-trained models, we experiment with a modified version of the Waterbirds dataset.
In the modified dataset, the spurious feature predicts the label only $75\%$ of the time; this is weaker than the $93\%$ in the original dataset and the invariant relationship which achieves $>85\%$ accuracy across all groups.
We ran \gls{erm}, \gls{jtt}, and corruption-powered \gls{jtt}. 
For both versions of \gls{jtt}, we tune over the same hyperparameters as in \citet{liu2021just}.
The results in \cref{tab:addexps-wb-results} show that corruption-powered \gls{jtt} is better than vanilla \gls{jtt} and \gls{erm}. 
The improvement of corruption-powered \gls{jtt} over vanilla \gls{jtt} increases from $0.5\%$ in \cref{tab:jtt-wb-results} to $4.4\%$ in \cref{tab:addexps-wb-results}; this indicates that vanilla \gls{jtt} is more sensitive to the strength of the spurious correlation than corruption-powered \gls{jtt}.

\begin{wraptable}[15]{r}{0.26\textwidth}
\vspace{-14pt}
\centering
    \caption{
   Accuracy of predicting the label from the image corrupted by \gls{pr} as patch-size decreases.
As the label is independent of the nuisance, a lower accuracy means that more semantic information is corrupted.
}
    \label{tab:addexps-y-pred}
  \centering
\begin{tabular}{lc}
    \toprule
\gls{pr} size &  Accuracy \\
    \midrule
Full image & $86\%$
\\
112 
        &  $76 \%$ 
\\
56  
        &  $ 73\%$ 
    \\
28     
        &  $ 64\%$ 
    \\
14   
        &  $ 58\%$ 
    \\
7   
        &  $ 57\%$ 
    \\
	\bottomrule   
\end{tabular}
\end{wraptable}

\paragraph{Experiments with multiple spurious features.}
We run \gls{rm}-powered \gls{nurd} with a modified version of the ColorFulMNIST dataset \citep{yong2023spurious}.
The images consist of $42\times 42\times 3$ pixels, with the middle $14\times 14$ forming the MNIST image showing a $0$ or a $1$ and the rest being background patches.
The digit in the middle predicts the binary label $1$ or $0$ with $75\%$ accuracy.
Given some $p\in [0,1]$, this dataset sets each of the background patch colors deterministically based on the image in the middle with probability $p$; with probability $1-p$,  each background is a random color (see figure 5 in \citep{yong2023spurious}.)
We generate the training data with $p=0.9$, and the validation and test data with $p=0$.

\Gls{rm}-powered \gls{nurd} with central-\gls{roi} sizes $14$ and $28$ achieves test accuracies $71.1\%$ and $70.3\%$ respectively, beating \gls{erm} which achieves $51.7\%$ because it relies more on the background colors.
\gls{pr} is not suited for this experiment because the different nuisance colors are chosen based on the patch position, and \gls{pr} randomizes patch positions which corrupt these nuisances. 

\paragraph{Experiments showing that corrupting the semantics is the reason behind the improved \gls{ood} performance in corruption-powered \nmeth{s}.}
First, we show that corruptions actually do corrupt semantics, taking \gls{pr} as the example.
We focus on the Waterbirds dataset to show how patch size affects semantics. 
For this investigation, we construct training and test datasets where the label and nuisance are independent and build models for predicting the label.

The results are in \cref{tab:addexps-y-pred} and show that as patch-size decreases, more semantic information is lost.
These results mean that for patch sizes $<28$, a biased model built from the corrupted image cannot predict the label well using semantics alone; the accuracy of random chance is $50\%$.
As the label is independent of the nuisance, a lower accuracy means more semantic information is corrupted.
However, on the original dataset, our biased models at these patch sizes achieve at least $85\%$ accuracy in predicting the label from the corrupted images, meaning that they rely mostly on the nuisance.

Second, to show that corruptions actually do help, we ran the full \gls{nurd} algorithm on the Waterbirds dataset from \citep{puli2021predictive} with a biased model built directly on the uncorrupted covariates; that is we train a model with \gls{erm} to predict $\mby$ from $\mbx$ and use it as the biased model in \gls{nurd}. 
The resulting test accuracy is $<70\%$. 
When using patch-sizes under $28$, the \gls{pr}-powered \gls{nurd} algorithm achieves a test accuracy of nearly $87\%$. 
This shows that the corruption of semantics is directly responsible for improving model robustness.

\begin{table}[ht]
\centering
 \vspace{20pt}
\begin{small}
 \caption{Mean and standard error of test accuracy across $10$ seeds of \gls{nurd} on classifying waterbirds. 
  \textit{Known}-nuisance \gls{nurd} uses a label for the type of background as the nuisance.
Selecting the size hyperparameter based on the average accuracy over $10$ seeds on the validation dataset gives $14$ for \gls{pr}, $196$ for \gls{rm}, $168$ for \gls{ff}, and $0.2$ for \gls{if}.
Consider the gap between \gls{erm} and known-nuisance \gls{nurd}.
\gls{nurd} with \gls{pr}, \gls{rm}, \gls{ff}, and \gls{if} close $99\%,99\%,82\%,99\%$ of the gap respectively.
\gls{nurd} with these semantic corruptions outperforms \gls{erm} and \gls{nurd} with \textsc{rand-crop}  and \textsc{gauss-noise}.
\gls{nurd} with all semantic corruptions outperforms \gls{erm} ($69.2\%$).
  }
    \label{tab:nurd-wb-results-full}
     \centering
    \begin{tabular}{cccccccccccc}
    \toprule 
      & \textit{known} 
      & \textsc{rm}
      & \textsc{rm}
      & \textsc{rm}
      & \textsc{rm}
      & \textsc{pr}
      & \textsc{pr}
      & \textsc{pr}
      & \textsc{pr}
	& 
\\
	& $\mbz$
      & 196
      & 168
      & 140
      & 112
      & 7
      & 14
      &  28
      & 56
	& 
	\gls{erm}
\\
\midrule
      Mean
      & 
      $87.2 \%$ 
      & ${86.9} \%$ 
      & ${86.6} \%$ 
      & $86.2 \%$ 
      & $86.3 \%$ 
      &  $85.6 \%$ 
      &  $86.9 \%$ 
      &  $82.5 \%$ 
      &  $84.9 \%$ 
      &  $68.0 \%$ 
\\
      Std. err.
      & $1.0\%$ 
      & $1.1\%$ 
      & $1.2\%$ 
      & $1.8\%$ 
      & $1.6\%$ 
      &  $1.4\%$ 
      &  $1.2\%$ 
      &  $2.0\%$ 
      &  $1.4\%$ 
      &  $1.9\%$ 
\\
\midrule
\\
  & 
    
      & \textsc{ff}
      & \textsc{ff}
      & \textsc{ff}
      & \textsc{ff}
      & \textsc{if}
      & \textsc{if}
      & \textsc{if}
      & \textsc{if}
	& 
\\
       & 
      & $196$
      & $168$
      & $140$
      & $112$
      & $0.1$
      & $0.2$
      & $0.3$
      & $0.4$
	& 
\\
\midrule
      Mean
      & 
      & $83.8 \%$ 
      & $83.5\%$ 
      & $81.0 \%$ 
      & $80.3 \%$ 
      &  $81.2 \%$ 
      &  $86.9 \%$ 
      &  $85.0\%$ 
      &  $81.9 \%$ 
      & 
\\
      Std. err.
      & 
      & $1.2\%$ 
      & $1.1\%$ 
      & $1.4\%$ 
      & $1.7\%$ 
      &  $1.7\%$ 
      &  $1.1\%$ 
      &  $1.5\%$ 
      &  $1.7\%$
      & 
\\
\midrule
\\
  & 
      & \textsc{rand-crop}
      & 
      & 
      & 
      & \textsc{gauss}
      & \textsc{gauss}
      & \textsc{gauss}
      & \textsc{gauss}
	& 
\\
       & 
      & 
      & 
      &
      & 
      & $0.01$
      & $0.25$
      & $1$
      & $4$
	& 
\\
\midrule
      Mean
      & 
      & $73.7 \%$ 
      & 
      & 
      &
      &  $75.8 \%$ 
      &  $74.1 \%$ 
      &  $78.0 \%$ 
      &  $83.9 \%$ 
      & 
\\
      Std. err.
      & 
      & $2.0\%$ 
      & 
      & 
      & 
      &  $3.2\%$ 
      &  $3.1\%$ 
      &  $3.4\%$ 
      &  $1.4\%$
      & 
\\
	\bottomrule\\
\end{tabular}
\end{small}
%\vspace{-10pt}
\end{table}

\begin{table}[H]
\centering
%\vspace{-5pt}
\begin{small}	
 \caption{Average accuracies and standard deviation over $4$ seeds of \gls{poe} and \gls{dfl} with semantic corruptions on the HANS dataset.
The results for  \textit{known} \gls{poe} and \gls{dfl} from \cite{mahabadi2019end}, where both methods use known nuisances.
For both methods, selecting the size hyperparameter based on the average accuracy on a small dataset ($1000$ samples) from the test distribution gives $n=3$.
With this size, \gls{poe} with \gls{nr} performs better than known-nuisance \gls{poe} while \gls{dfl} with \gls{nr} closes ${84}\%$ of the gap between \gls{erm} and known-$\mbz$ \gls{dfl} .
 }
    \label{tab:nli-results-full}
  \centering
    \begin{tabular}{ccc}
    \toprule
      $\mbz$ &  \gls{poe} &   \gls{dfl} \\
    \midrule
        \textit{Known} 
        &  $66.3 \pm 0.6 \%$ 
        & $69.3\pm 0.2\%$   
    \\
        1-gram 
        &  $65.7 \pm 2.0 \%$ 
        & $66.5\pm 1.5\%$   
    \\
        2-gram  
        &  $ 66.0 \pm 0.9 \%$ 
        & $68.5\pm 0.7\%$   
    \\
        3-gram   
        &  $ 66.7 \pm 1.5  \%$ 
        & $68.4 \pm 1.5\%$   
    \\
        4-gram   
        &  $ 66.2 \pm 2.9 \%$ 
        & $65.0 \pm 2.0\%$   
    \\
	\midrule    
            \gls{erm} 
            & $-$ 
            &  $63.6\%$.
	\\
	\bottomrule
\end{tabular}
\end{small}
% \vspace{-20pt}
\end{table}

\begin{table}[t]
\centering
%\vspace{-20pt}
\begin{small}
    \caption{Mean and standard error of test accuracy across $10$ seeds of \gls{nurd} on detecting cardiomegaly from chest X-rays.
\textit{Known}-nuisance \gls{nurd} uses the hospital as the nuisance.
Selecting the corruption parameters based on the mean accuracy over $10$ seeds on the validation dataset gives $14$ for \gls{pr}, $196$ for \gls{rm}, $168$ for \gls{ff}, and $0.1$ for the \gls{if}.
 Consider the gap between \gls{erm} and known-nuisance \gls{nurd}.
\gls{nurd} with \gls{pr}, \gls{rm}, \gls{ff}, and \gls{if} close $72\%,82\%,65\%,35\%$ of the gap respectively.
\gls{nurd} with semantic corruptions outperforms \gls{nurd} with baseline augmentations \textsc{rand-crop} and \textsc{gauss-noise}.
\gls{nurd} with \gls{pr} and \gls{rm} outperforms \gls{erm} for all size parameters.
}
    \label{tab:xray-results-full}
  \centering
    \begin{tabular}{ccccccccccc}
    \toprule 
      & 
      \textit{known} 
      & \textsc{rm}
      & \textsc{rm}
      & \textsc{rm}
      & \textsc{rm}
      & \textsc{pr}
      & \textsc{pr}
      & \textsc{pr}
      & \textsc{pr}
	& 
\\
       &  $\mbz$
      & 196
      & 168
      & 140
      & 112
      & 7
      & 14
      &  28
      & 56
	& 
	\gls{erm}
\\
\midrule
      Mean
      & 
      $81.7 \%$ 
      & $78.7 \%$ 
      & $78.3 \%$ 
      & $77.2 \%$ 
      & $73.6 \%$ 
      &  $76.2 \%$ 
      &  $77.0 \%$ 
      &  $74.9 \%$ 
      &  $74.3 \%$ 
      &  $65.3 \%$
\\
      Std. err.
      & $0.3\%$ 
          & $0.3\%$ 
          & $0.8\%$ 
          & $0.8\%$ 
          & $0.7\%$ 
      &  $1.2\%$ 
      &  $1.2\%$ 
      &  $1.0\%$ 
      &  $1.4\%$
        &  $1.1\%$
\\
\midrule
\\
  & 
    
      & \textsc{ff}
      & \textsc{ff}
      & \textsc{ff}
      & \textsc{ff}
      & \textsc{if}
      & \textsc{if}
      & \textsc{if}
      & \textsc{if}
	& 
\\
       & 
      & $196$
      & $168$
      & $140$
      & $112$
      & $0.1$
      & $0.2$
      & $0.3$
      & $0.4$
	& 
	% \gls{erm}
\\
\midrule
      Mean
      & 
      % $81.7 \%$ 
      & $74.4 \%$ 
      & $76.0\%$ 
      & $75.3 \%$ 
      & $71.3 \%$ 
      &  $71.0 \%$ 
      &  $68.0 \%$ 
      &  $62.0 \%$ 
      &  $57.1 \%$ 
      & 
\\
      Std. err.
      & 
      % $0.3\%$ 
      & $1.5\%$ 
      & $0.6\%$ 
      & $0.9\%$ 
      & $1.6\%$ 
      &  $1.0\%$ 
      &  $1.6\%$ 
      &  $1.8\%$ 
      &  $3.2\%$
      & 
\\
\midrule
\\
  & 
      & \textsc{rand-crop}
      & 
%      \textsc{crop}
      & 
%      \textsc{crop}
      & 
%      \textsc{crop}
      & \textsc{gauss}
      & \textsc{gauss}
      & \textsc{gauss}
      & \textsc{gauss}
	& 
\\
       & 
      &
%      $56$
      & 
%      $28$
      & 
%      $14$
      & 
%      $7$
      & $0.01$
      & $0.25$
      & $1$
      & $4$
	& 
	% \gls{erm}
\\
\midrule
      Mean
      & 
      & $59.9 \%$ 
      &
      & 
      & 
      &  
      $62.3 \%$ 
      &  $63.5 \%$ 
      &  $68.0 \%$ 
      &  $69.0 \%$ 
      & 
\\
      Std. err.
      & 
      & 
			$2.1\%$ 
      & 
      & 
      & 
      &  $3.7\%$ 
      &  $3.4\%$ 
      &  $1.1\%$ 
      &  $1.9\%$
      & 
\\
	\bottomrule\\
\end{tabular}
\end{small}
\end{table}

\begin{table}[t]
\centering
\vspace{-15pt}
    \caption{
    Test worst-group accuracies of \gls{jtt} with semantic corruptions on waterbirds.
Selecting the corruption hyperparameters on the validation worst-group accuracy gives size $14$ for \gls{pr}, size $196$ for \gls{rm}, size $112$ for \gls{ff}, and threshold $0.4$ for \gls{if}.
\gls{jtt} with these semantic corruptions outperforms \gls{erm}, vanilla \gls{jtt}, and \gls{jtt} with the baseline corruptions \textsc{rand-crop} and \textsc{gauss-noise}.
\gls{jtt} with \gls{pr} and \gls{rm} outperforms \gls{jtt} with the baseline corruptions and \gls{erm} for all sizes.}
    \label{tab:appsec-jtt-wb-results}
  \centering
    \begin{tabular}{ccccccccccc}
    \toprule 
      \textit{Vanilla} 
      & \textsc{rm}
      & \textsc{rm}
      & \textsc{rm}
      & \textsc{rm}
      & \textsc{pr}
      & \textsc{pr}
      & \textsc{pr}
      & \textsc{pr}
	& 
\\
	\gls{jtt}
      & 196
      & 168
      & 140
      & 112
      & 7
      & 14
      & 28
      & 56
	& 
	\gls{erm}
\\
\midrule
      $86.5 \%$ 
      & ${88.2} \%$ 
      & ${88.0} \%$ 
      & $86.9 \%$ 
      & $86.2 \%$ 
      &  ${89.3} \%$ 
      &  ${89.0} \%$ 
      &  ${88.9} \%$ 
      &  ${89.1} \%$ 
      &  $72 \%$ 
\\
\midrule
\\
      & \textsc{ff}
      & \textsc{ff}
      & \textsc{ff}
      & \textsc{ff}
      & \textsc{if}
      & \textsc{if}
      & \textsc{if}
      & \textsc{if}
	& 
\\
      & $196$
      & $168$
      & $140$
      & $112$
      & $0.1$
      & $0.2$
      & $0.3$
      & $0.4$
	& 
\\
\midrule
      & $82.5 \%$ 
      & $84.5\%$ 
      & $85.2 \%$ 
      & $87.2 \%$ 
      &  $69.1 \%$ 
      &  $80.0 \%$ 
      &  $81.7\%$ 
      &  $87.0 \%$ 
      & 
\\
\midrule
\\
      & \textsc{rand-crop}
      & 
      & 
      & 
      & \textsc{gauss}
      & \textsc{gauss}
      & \textsc{gauss}
      & \textsc{gauss}
	& 
\\
      & 
      & 
      &
      & 
      & $0.01$
      & $0.25$
      & $1$
      & $4$
	& 
\\
\midrule
      & $75\%$
      & 
      & 
      & 
      &  $0.0 \%$ 
      &  $0.0 \%$ 
      &  $71.0 \%$ 
      &  $0.0 \%$ 
      &
\\
	\bottomrule\\
\end{tabular}
%\vspace{-3pt}
\end{table}

\begin{table}[t]
\centering
\begin{small}
\caption{Worst-group and average test accuracies of \gls{jtt} with semantic corruptions on \gls{nli}.
\gls{jtt} with \gls{pm} and \gls{nr} of every size outperforms vanilla \gls{jtt}.
Selecting the size hyperparameter for \gls{nr} using validation worst-group accuracy, like \citet{liu2021just} do for vanilla \gls{jtt}, gives $n=1$. At this size, \gls{jtt} with \gls{nr} outperforms vanilla \gls{jtt} by $3\%$ accuracy.
  }
    \label{tab:appsec-jtt-nli-results}
  \centering
    \begin{tabular}{ccc}
    \\
    \toprule
	 &  Worst-group &   Average \\
    \midrule
        \textit{Vanilla} \gls{jtt} &  $71.3 \%$ & $79.1\%$   
	\\
        \gls{pm}  &  ${72.1}\%$ & $79.9
        \%$   
    \\
        1-gram  &  ${74.3}\%$ & $79.7\%$   
    \\
        2-gram   &  ${71.9} \%$ & $80.0\%$   
    \\
        3-gram   &  ${72.0} \%$ & $80.1\%$   
    \\
        4-gram   &  ${73.4}\%$ & $80.4\%$   
    \\
	\midrule    
            \gls{erm} & $67.9\%$ &  $-$
	\\
	\bottomrule\end{tabular}

% \vspace{-5pt}
\end{small}
\end{table}

\begin{table}[t]
\centering
\caption{ANLI \citep{nie2019adversarial} evaluations of models trained on MultiNLI.
With a t-test to measure statistical significance, at the standard significance level of 0.05, we found that \gls{poe} with \gls{nr} gave a statistically significant improvement over the baseline on ANLI-R1 and ANLI-R2, while \gls{dfl} gave a statistically significant improvement on ANLI-R1.}
\label{tab:anli_results}
\begin{tabular}{cccc}
\\
\toprule
Model & ANLI - R1 & ANLI - R2 & ANLI - R3 \\
\midrule
\gls{erm} & $23.1 \pm 0.9$ & $28.2 \pm 0.8$ & $29.8 \pm 0.4$ \\
\gls{poe}-known & $23.5 \pm 0.6$ & $27.8 \pm 0.8$ & $29.8 \pm 0.8$ \\
\gls{dfl}-known & $23.7 \pm 1.3$ & $27.8 \pm 1.1$ & $30.4 \pm 0.9$ \\
\gls{poe} - n3 & $24.8 \pm 1.1$ & $29.2 \pm 0.4$ & $30.4 \pm 1.2 $ \\
\gls{dfl} - n3 & $24.9  \pm 0.6$ & $29.0 \pm 1.2$ & $29.9 \pm 0.3$ \\
\gls{poe} - \gls{pm} & $23.6 \pm 1.2$ & $27.3 \pm 0.8$ & $29.8 \pm 0.8$ \\
\gls{dfl} - \gls{pm} & $ 22.3 \pm 0.7$ & $ 27.7 \pm 0.6$ & $29.3 \pm 1.1$ \\
\bottomrule
\end{tabular}
\end{table}

\begin{table}[t]
\centering
\caption{Mean and standard deviation of CAD \citep{kaushik2019learning} test accuracy over 4 seeds.
At the end, we also report the results of finetuning BERT on CAD training data from \citep{kaushik2019learning}.
When trained on MNLI, on average over the CAD subsets \textsc{RH} and \textsc{RH}, \gls{dfl} and \gls{poe} with semantic corruptions,  \gls{dfl} and \gls{poe} with known-nuisances, and \gls{erm} perform on par (within one std.) or better than finetuning directly on the training CAD dataset.
The improvement over finetuning directly on CAD may be due to the fact that the CAD dataset  is much smaller than MNLI ($~7k$ vs. $~400k$).
}
\label{tab:cad_results}
\begin{tabular}{lccc}
\\
\toprule
Method & \textsc{RP} & \textsc{RH} &  Avg. on \textsc{RP} and \textsc{RH} \\
\midrule
\gls{erm} on MNLI & $61.1 \pm 0.3$ &  $76.5 \pm 0.4$ &  $68.8 \pm 0.2$ \\
\midrule
\gls{poe}-known & $60.6 \pm 0.5$ &  $77.0 \pm 1.1$ &  $68.8 \pm 0.3$ \\
\gls{poe} 3-gram & $60.8 \pm 0.5$ &  $76.1 \pm 0.7$ &  $68.4 \pm 0.2$ \\
\gls{poe} \gls{pm} & $61.7\pm 0.6$ &  $75.6 \pm 1.0$ &  $68.6 \pm 0.5$ \\
\midrule
\gls{dfl}-known & $60.6 \pm 0.8$ &  $76.2 \pm 0.7$ &  $68.4 \pm 0.4$ \\
\gls{dfl} 3-gram & $58.4 \pm 1.8$ &  $72.7 \pm 1.0$ &  $65.5 \pm 1.4$ \\
\gls{dfl} \gls{pm} & $62.4 \pm 0.7$ &  $76.1 \pm 0.8$ &  $69.3\pm 0.6$ \\
\midrule
\gls{erm} on CAD (from \citep{kaushik2019learning}) & $64.6 $   & $67.8 $  & $66.2 $  \\
\bottomrule
\end{tabular}
\end{table}

\end{document}